%% file: main.tex
\documentclass[10pt,twocolumn]{article}
\usepackage{float}
\usepackage{colm2024_conference}
\usepackage{amsmath,amssymb,amsthm}
\usepackage{graphicx}
\usepackage{adjustbox}
\usepackage{booktabs}
\usepackage{tabularx}
\usepackage[font=small]{caption}
\usepackage{placeins}
\usepackage{listings}
\usepackage{enumitem}
\usepackage{xurl}
\usetikzlibrary{arrows.meta,shapes.geometric}

\floatstyle{ruled}
\newfloat{algorithm}{tbp}{loa}
\floatname{algorithm}{Algorithm}

\providecommand{\Description}[2][]{}

\newcommand{\sys}{{\textsc{QwenGyre}}}
\newcommand{\xlong}{{\textsc{x}}Long}
\definecolor{red}{HTML}{AA0000}

\input{content/authors}

\makeatletter
\def\@maketitle{%
  \vbox{\hsize\textwidth
    \centering
    {\Large\bfseries\@title\par}
    \bigskip
    {\large\@author\par}
    \bigskip
  }%
  \thispagestyle{firstpage}%
}
\makeatother
\renewenvironment{abstract}{\section*{\abstractname}}{\par}
\fancypagestyle{firstpage}{%
  \fancyhf{}
  \fancyhead[L]{\includegraphics[height=20pt]{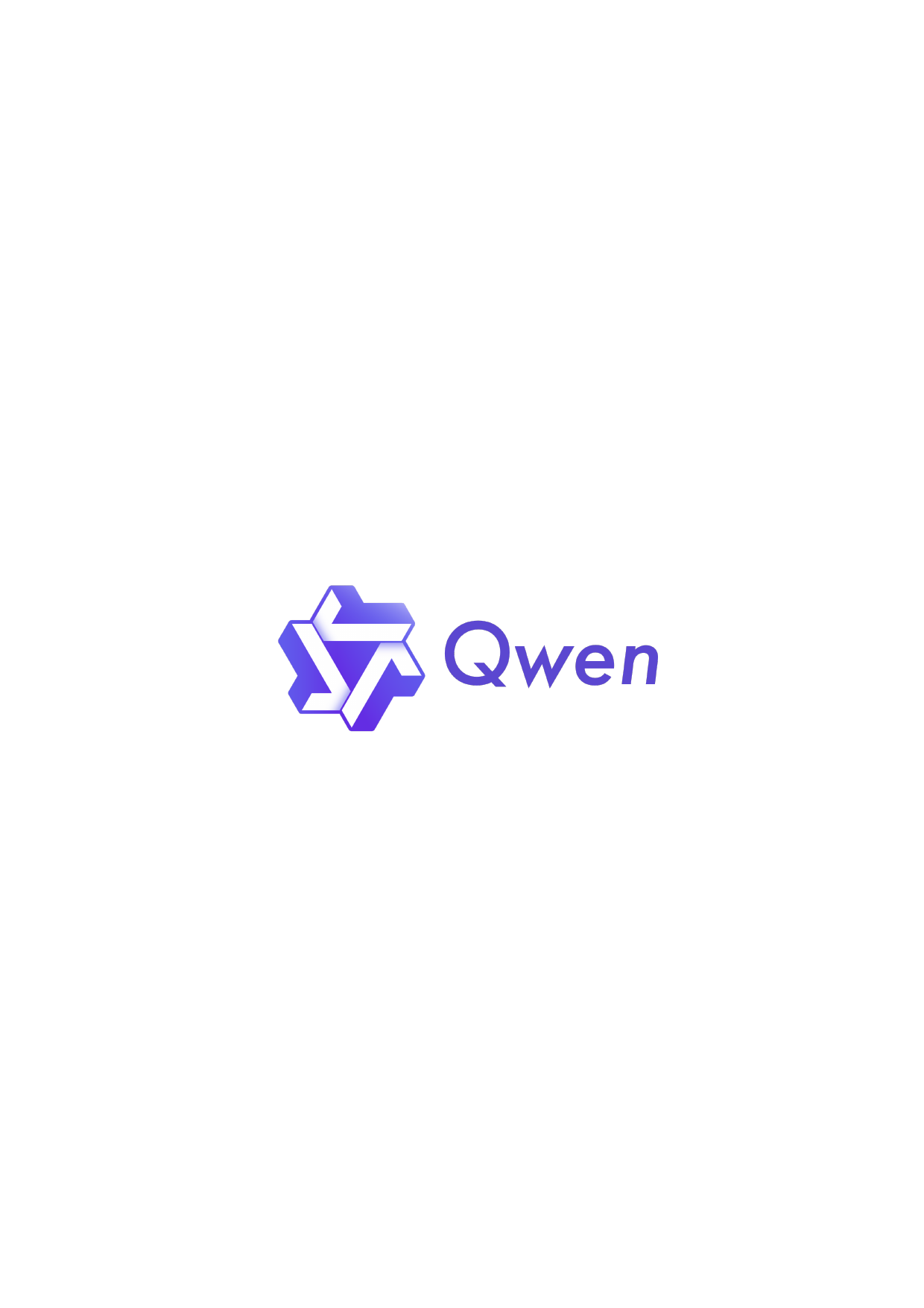}}
  \fancyhead[R]{\textcolor[HTML]{4D4D4D}{\today}}
  \fancyfoot[C]{\thepage}
}

\title{\sys{}: An Elastic Reinforcement Learning Framework for\\ Training \xlong-Horizon Agents}
\author{\PaperAuthorBlock}
\date{}
\hypersetup{
  pdftitle={\sys{}: An Elastic Reinforcement Learning Framework for Training \xlong-Horizon Agents},
  pdfauthor={\PaperAuthorNames}
}

\begin{document}
\twocolumn[%
\maketitle

\begin{abstract}
Large language model (LLM) agents increasingly undertake e\textsc{x}treme-long (\xlong{}) horizon tasks, where a single execution can span hours, hundreds of model--environment interactions, and nearly 1M tokens per rollout.
Applying online reinforcement learning (RL) to such executions poses two fundamental challenges:
(1) severe execution variance and prolonged rollout delays cause massive GPU idling;
and (2) complex non-linear branching generates massive trajectory redundancy, crippling training efficiency.

To address these, we presents \sys{}, an end-to-end framework for \xlong-horizon online RL.
\sys{} elastically reallocates GPUs between rollout and training without interrupting live executions, while its trajectory processor reconstructs branching histories, scores partial progress, and deduplicates redundant paths to bound training costs.
Scaled to our flagship model, Qwen~3.8 2.4T, with 700K tokens per rollout, \sys{} yields a 6.0\% absolute gain on NL2RepoBench (52.5\% $\to$ 58.5\%) in 48 steps. Across our evaluations on diverse domains of training datasets, \sys{}
delivers up to $1.85\times$ and $1.78\times$ speedups over Colocate
and Async, respectively.
\end{abstract}

\vspace{1cm}

  \begin{center}
    {\includegraphics[width=\textwidth]{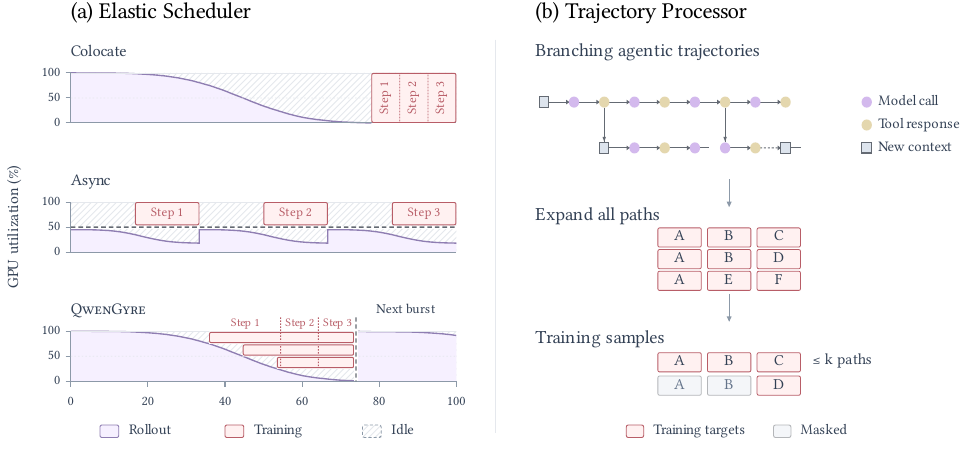}}
    \captionof{figure}{\sys{} addresses two challenges of \xlong-horizon agentic RL.
  (a) The Elastic Scheduler reallocates idle rollout capacity to training
  while continuing to serve ongoing executions.
  (b) The Trajectory Processor converts branching executions into bounded
  training samples while masking repeated targets in shared prefixes.}
    \label{fig:gpu-utilization}
  \end{center}
]

\PaperAuthorFootnotes
\null
\clearpage

\input{content/introduction}
\input{content/background_and_motivation}
\input{content/horizongyre_overview}
\input{content/elastic_scheduling_plane}
\input{content/trajectory_processing_plane}
\input{content/experimental_results}
\input{content/related_work}
\input{content/limitations}
\input{content/conclusion}

\bibliographystyle{colm2024_conference}
\bibliography{biblio}

\clearpage
\appendix
\input{content/appendix_staleness_analysis}
\clearpage
\input{content/appendix_rl_algorithm}
\clearpage
\input{content/appendix_case_study}

\end{document}

%% file: content/authors.tex
\newcommand{\PaperAuthorNames}{Weiqi Wang, Yuxin Zhou, Mouxiang Chen,
  Siyuan Zhang, Yi Zhang, Yuyan Luo, Zhiyu Yin, Chencan Wu,
  Jiemin Jiang, Wentao Yao, Chujie Zheng, JianWei Zhang}

\newcommand{\PaperAuthorName}[3]{%
  \href{mailto:#2}{\textcolor{black}{#1}}\textsuperscript{#3}}

\newcommand{\PaperEqualContributionNote}{Equal contribution.}
\newcommand{\PaperCorrespondingAuthorNote}{Corresponding author.}

\newcommand{\PaperAuthorFootnotes}{%
  \begingroup
  \renewcommand{\thefootnote}{\fnsymbol{footnote}}%
  \footnotetext[1]{\PaperEqualContributionNote}%
  \footnotetext[2]{\PaperCorrespondingAuthorNote}%
  \endgroup
}

\newcommand{\PaperAuthorBlock}{%
  \begingroup
  \normalfont\centering
  {\large
    \begin{tabular}{@{}cccc@{}}
      \PaperAuthorName{Weiqi Wang}{wangweiqi0329@mail.ustc.edu.cn}{1,2,*} &
      \PaperAuthorName{Yuxin Zhou}{suishi.zyx@alibaba-inc.com}{1,*} &
      \PaperAuthorName{Mouxiang Chen}{chenmx@zju.edu.cn}{1,*} &
      \PaperAuthorName{Siyuan Zhang}{18301188748@163.com}{1,3} \\
      \PaperAuthorName{Yi Zhang}{1109276519@qq.com}{1} &
      \PaperAuthorName{Yuyan Luo}{luoyuyan.lyy@alibaba-inc.com}{1} &
      \PaperAuthorName{Zhiyu Yin}{yinzhiyu.yzy@alibaba-inc.com}{1} &
      \PaperAuthorName{Chencan Wu}{wuchencan.wcc@alibaba-inc.com}{1} \\
      \PaperAuthorName{Jiemin Jiang}{jiangjiemin.jjm@alibaba-inc.com}{1} &
      \PaperAuthorName{Wentao Yao}{wenyao.ywt@alibaba-inc.com}{1} &
      \PaperAuthorName{Chujie Zheng}{zhengchujie.zcj@alibaba-inc.com}{1} &
      \PaperAuthorName{JianWei Zhang}{zhangjianwei.zjw@alibaba-inc.com}{1,\textdagger}
    \end{tabular}\par}
  \medskip
  {\small
    \textsuperscript{1}Alibaba Token Hub, Alibaba Group\par
    \textsuperscript{2}University of Science and Technology of China\par
    \textsuperscript{3}Tsinghua University\par}
  \endgroup
}

%% file: content/introduction.tex
\section{Introduction}
\label{sec:intro}

Large language model (LLM) agents increasingly solve software
engineering tasks that are \emph{long-horizon}: completing one task
requires an extended sequence of interdependent decisions and
environment interactions over a persistent workspace. 
A representative class of such tasks is issue resolution (e.g., SWE-bench~\citep{swebench,swebenchpro}),
which localizes a single fix within an existing repository and spans tens of
interactions.
On these workloads, online reinforcement learning (RL) has demonstrated
promising gains by iteratively refining policies through environment interactions~\citep{deepswe_rl,golubev2025multiturn,swemaster,legorl}.

As foundation models rapidly advance in reasoning and context capabilities,
the paradigm is shifting toward fully autonomous, end-to-end software
development, such as repository generation~\citep{nl2repobench}, system reimplementation~\citep{programbench}, and codebase-wide refactoring~\citep{swerefactorbench}.
These workloads extend long-horizon execution to multi-hour, project-scale runs.
A single stateful rollout spans hundreds of dependent model--environment interactions and processes around 1M tokens across its context.
We refer to this regime as \emph{\xlong{}-horizon}.
Scaling RL to \xlong{}-horizons introduces fundamental
challenges on both system scheduling and trajectory modeling.

\textbf{Challenge 1: Extreme rollout variance and severe GPU underutilization at \xlong{}-horizons.}
Online RL conventionally adopts a \emph{colocate} design that alternates
the GPU pool between rollout and training~\citep{hybridflow,real}.
At \xlong{}-horizons, however, rollouts exhibit severe long-tail variance;
without general support for checkpointing workspace state, training remains
blocked until all live executions end, leaving GPUs idle on a few trailing tasks.
Alternative {\emph{async}} deployments statically partition GPUs into
dedicated pools~\citep{areal}.
Nevertheless, at \xlong-horizons, the wait for a ready batch can far exceed the duration of a training
step, leaving training GPUs idle while rigid partitions prevent either
stage from utilizing the other's spare capacity.
The challenge is therefore to reallocate GPUs dynamically as demand changes,
using idle capacity for training while maintaining inference service for
ongoing rollouts.

\textbf{Challenge 2: Complex trajectory branching and prohibitive training overhead at \xlong{}-horizons.}
The linear agent loop used in simple agent designs~\citep{react,sweagent}
does not survive at \xlong{}-horizons: sustaining hours of execution
typically relies on black-box harnesses that routinely compact
saturated histories, delegate sub-tasks to sub-agents with isolated contexts,
and retry failed execution paths~\citep{agentlightning_v1,clawgym2}.
As a result, a single task execution no longer yields a sequential trajectory,
but an intricate, non-linear graph of divergent context paths.
This non-linearity leads to an explosion in sample and token volume:
expanding all these paths into individual training samples introduces severe
redundancy across shared prefixes and incurs prohibitive training overhead.
The challenge is thus to reconstruct valid training representations from
such complex, branching executions while preventing the sheer volume of paths
and tokens from overwhelming the training pipeline.

In this paper, we present \sys{}, an end-to-end
framework for online RL
through {black-box} agent harnesses at
\xlong{}-horizons.
Its design separates the lifetimes of
harness executions,
GPU roles, and training samples. Harness
state persists
across GPU role changes, while recorded
model calls are
materialized into training samples on
demand. 

To address Challenge 1, \sys{} introduces an \emph{elastic scheduler}, which
adjusts GPU allocation between rollout and training according to
the amount of unfinished rollout work while ensuring that model
calls from ongoing harness executions continue to be served.
Centralized data-parallel training allows newly available
nodes to join an ongoing batch as the training pool grows.
Streaming training distributes work according to each participating
data-parallel group's progress, helping the groups finish the same
batch at similar times to minimize idle gaps.

To address Challenge 2, \sys{} introduces a \emph{trajectory processor}, which
  organizes each execution's recorded model calls into a trajectory
  tree with shared prefixes, preserving each output's original
  conditioning context.
  Task-specific evaluation scores preserved work after termination,
  including assessable partial progress after timeout.
  Once valid rewards are available, \sys{} selects a bounded set of trajectories per execution by role priority to control training cost.
  Training uses only policy-generated targets~\citep{tito}, counts shared targets once, and averages token losses within each execution to avoid overweighting executions with more paths.

We evaluate \sys{} on NL2RepoBench~\citep{nl2repobench},
DeepSWE~\citep{deepswe}, and TerminalBench~\citep{terminalbench}
using Qwen~3.6 122B. We also apply \sys{} to
\xlong{} RL training of our flagship model, Qwen~3.8 2.4T,
on NL2RepoBench.
We compare against Colocate and Async
baselines using equal GPU budgets 
{under the same staleness constraints}.
Across the reported configurations, \sys{} achieves end-to-end
speedups of 1.38 to 1.78 times over Async and 1.21 to 1.85 times
over Colocate while matching baseline training scores.
On NL2RepoBench, \sys{} improves Qwen~3.8 2.4T's score from
approximately 52.5 \% to 58.5 \% over 48 training steps.

In summary, this paper makes the
following contributions:

\begin{itemize}[leftmargin=*]
  \item We identify the scheduling and
  trajectory-processing
  challenges of online RL for \xlong{}
  agent executions.
  \item We design an elastic scheduler that reallocates GPUs between
    rollout and training as demand changes, lets newly available GPUs
    join an ongoing training batch, and balances training work across
    them to reduce idle time.
  \item We develop a trajectory processor that preserves each model
    output's original context and evaluates partial work after timeouts.
    It selects a limited number of trajectories per execution, counts
    shared outputs once, and averages token losses within each execution.
  \item We demonstrate \sys{} on Qwen~3.8 2.4T,
    improving NL2RepoBench passrate from 52.5\% to 58.5\% in
    48 training steps. Across evaluated workloads, \sys{} achieves
    up to $1.85\times$ and $1.78\times$ end-to-end speedups over
    Colocate and Async, respectively.
\end{itemize}

%% file: content/background_and_motivation.tex
\section{Background and Motivation}
\label{sec:background}

\subsection{\xlong{} Agentic RL Workloads}
\label{sec:background-workload}

{We first describe the group-relative RL pipeline and the
execution characteristics of \xlong{} agentic workloads. We then
examine their implications for rollout--training scheduling and
training data construction.}

\begin{figure}[t]
\centering
\includegraphics[width=\columnwidth]{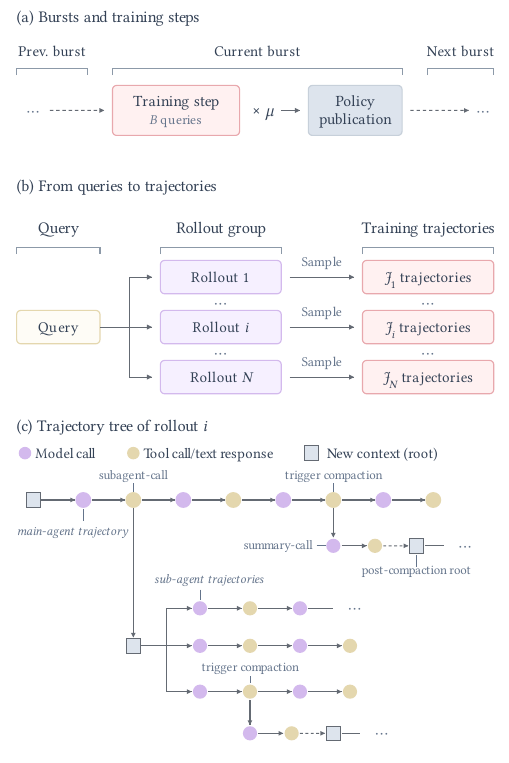}
\caption{Hierarchy of training bursts, rollout groups, and trajectories.
(a) Each burst performs $\mu$ consecutive training steps, then publishes the policy; each step consumes $B$ query groups.
(b) Each query produces $N$ independent rollouts, and rollout $i$
yields $J_i$ training trajectories.
(c) A rollout includes main-agent and sub-agent trajectories,
with compaction producing summary branches. Dashed links reinsert summaries into post-compaction contexts.}
\label{fig:data-hierarchy}
\end{figure}

\paragraph{Group-relative RL pipeline.}

For each \emph{query}, $N$ independent rollouts sampled from the policy
form a \emph{rollout group} (Figure~\ref{fig:data-hierarchy}(b)). Advantages are computed
from the group's scalar rewards, avoiding a learned value function
\citep{deepseekmath}. In our pipeline, each \emph{training step}
consumes a \emph{batch} of $B$ distinct rollout groups and advances the policy version by one. To amortize the cost of weight publication, the RL framework may
perform $\mu$ consecutive training steps between publications;
we call this interval a \emph{burst}. Each burst consumes $\mu B$
rollout groups and publishes the weights at its final step,
as shown in Figure~\ref{fig:data-hierarchy}(a).

\paragraph{Harness-driven agentic RL}
An agentic rollout is an end-to-end task execution in which a model
interacts with an environment through a harness. The harness
maintains task state, invokes tools according to model outputs,
and constructs context for subsequent model calls.
We call each model call--response interaction a \emph{model calls},
each tool invocation a \emph{tool call}, and the full task lifecycle,
including retries, a \emph{rollout execution}.
Figure~\ref{fig:data-hierarchy}(c) shows an execution that branches at tool calls
and the trajectories selected from it for training.

In our black-box setting, the harness accesses the model through
a proxy provided by the RL framework. The framework records model calls
and receives task-level evaluation results without changing the
harness's execution logic \citep{agentlightning_v1,clawgym2}.
This interface provides no general mechanism to checkpoint and
resume a live execution. If a model call cannot be served, the
harness may stall or time out, changing the task outcome for
reasons unrelated to the policy's decisions. Scheduling must
therefore preserve inference access throughout the execution.
Requests can be routed to different rollout instances while the harness
continues running.

\paragraph{The \xlong{} execution profile.}

Recent software engineering benchmarks evaluate agents on repository
construction, program reimplementation, and migration across entire
codebases \citep{nl2repobench,programbench,swemarathon,swerefactorbench}.
We focus on \xlong{} executions that last hours, span hundreds of
dependent model--environment interactions, and process around 1M
input and output tokens across model calls. This lifecycle volume
counts repeated contexts at each request and can exceed the context
limit of an individual request.

These workloads exhibit both long execution tails and clustered
terminations.
Execution durations vary across queries and within each rollout group:
generation length and tool latency affect the time per interaction
\citep{heddle}.
Over hundreds of dependent interactions, these differences can leave
some executions running long after their peers finish.
Since group-relative advantages require all $N$ rewards, a straggler
can delay training on its entire group.
Executions with similar start times and a common timeout budget may expire within a narrow interval, producing clusters of execution terminations that can sharply reduce inference demand.

\subsection{Rollout--Training Scheduling}
\label{sec:background-scheduling}

RL frameworks organize rollout and training through two scheduling
policies. A data \emph{dispatch policy} controls when and how much
rollout work enters the pipeline. A GPU \emph{placement policy}
allocates GPUs between rollout and training in space and time.
These policies can be combined to form different framework designs:
Async, for example, can use either continuous or
boundary dispatch.

\paragraph{Data dispatch policy and staleness.}

Under \emph{boundary dispatch}, each rollout group enters the dispatch
queue when admitted and retains its slot until training consumes it.
Let $\varphi\geq0$ denote the extra dispatch in units of $B$ rollout
groups. With $\mu$ training steps per burst, the target queue depth is
$(\varphi+\mu)B$ groups. The queue is filled to this depth at startup
and replenished with $\mu B$ new groups after each burst's weight
publication.

We measure a group's \emph{scheduling staleness} as $d=v_t-v_d$,
with policy versions indexed by training steps. Here, $v_d$ is the
version of the latest published policy when the group is admitted,
and $v_t$ is the policy version immediately before the step that
consumes it.

The target queue depth $Q_{\mathrm b}$ and steady-state mean scheduling
staleness satisfy
\begin{align}
  Q_{\mathrm b} &= (\varphi+\mu)B, \\
  \mathbb{E}[d] &= \varphi + \frac{\mu-1}{2}.
  \label{eq:mean-version-diff}
\end{align}
The $\varphi$ term accounts for extra dispatched groups, while
$\tfrac{\mu-1}{2}$ averages the training-step positions within a burst,
indexed from zero.

\paragraph{GPU placement policy and utilization.}

{\emph{Async}} assigns fixed GPU pools to rollout and training,
allowing the two stages to overlap \citep{areal,streamrl}.
Long rollout latency requires high concurrency to sustain throughput,
but increasing dispatch depth also raises scheduling staleness
(Equation~\ref{eq:mean-version-diff}).
Limiting dispatch depth can therefore leave training GPUs waiting for
data, while the fixed split prevents training from using spare rollout
capacity as executions finish (Figure~\ref{fig:gpu-utilization},
{Async}).

{\emph{Colocate}} alternates a shared GPU pool between rollout and
training, as supported by HybridFlow and ReaL
\citep{hybridflow,real}.
Because the pool switches as a unit, training must wait for the rollout
tail even when groups are ready and GPUs are idle
(Figure~\ref{fig:gpu-utilization}, {Colocate}).
More steps per burst amortize switching and weight-publication costs,
but increase scheduling staleness for later steps.

\subsection{From Harness Executions to Training Data}
\label{sec:background-data}

At \xlong{}-horizons, context management and sub-agent delegation routinely
break the append-only history of a simple agent loop~\citep{react,sweagent}.
Harnesses compact histories as context windows fill and maintain
separate contexts for sub-agents.

The harnesses used for these executions are also used in deployment, and
the most capable, such as Claude Code and Codex, are closed and evolve
rapidly; reimplementing their control flow inside an RL framework is
impractical and would train the policy under a harness different from
the one it is deployed with. This motivates the black-box interface of
Section~\ref{sec:background-workload}, which exposes only model calls
and task-level evaluation results. Using these records for group-relative RL requires valid
outcome rewards and explicit choices of training targets and loss weights.

\paragraph{Execution structure.}
An execution can contain multiple \emph{trajectories}. A trajectory
is a linear path of {model calls} in which each successive call
extends the preceding {call's} recorded context and response.
Compaction and other context edits can rewrite an existing history
  \citep{agentlightning_v1}. 
{Calls} may remain causally related across these changes without
forming one append-only history. A single agent or role can therefore
contribute multiple trajectories.
Concatenating {requests} across these boundaries as one history can give
responses contexts they did not have during generation.
Across hundreds of interactions, repeated context changes produce varying numbers of trajectories with different lengths.

\paragraph{Reward validity.}
An execution's reward can fail to support training in two
distinct ways. First, an execution may reach its deadline before
completing the task \citep{swemarathon}: its captured trajectories
describe a partial attempt, and assigning all partial attempts the
same failure reward obscures differences in task progress. Second,
infrastructure or evaluator failures may leave no valid assessment at
all; a missing assessment is distinct from a valid zero reward and
cannot be filled in as one. Even after every execution in a group has
terminated, the group may therefore lack the $N$ valid sample rewards
needed for advantage estimation.

\paragraph{Training semantics.}
A task-level outcome reward evaluates an execution of the original task.
Figure~\ref{fig:data-hierarchy} illustrates the varying trajectory
counts: each query has $N$ executions, while execution $i$ contributes
$J_i$ selected trajectories. A flat average of trajectory losses gives
executions with more selected paths greater aggregate weight
\citep{agentlightning_v1}.
Within an execution, the same averaging can give many short auxiliary
paths more aggregate weight than a long main-agent path. A shared
response can likewise receive extra weight if each copy contributes
to the loss. The RL objective must therefore specify which roles, trajectories, and tokens contribute to the loss, together with the normalization and weighting rules.

%% file: content/horizongyre_overview.tex
 \section{\sys{} Overview}
  \label{sec:overview}

  \begin{figure}[t]
  \centering
  {\includegraphics[width=\columnwidth]{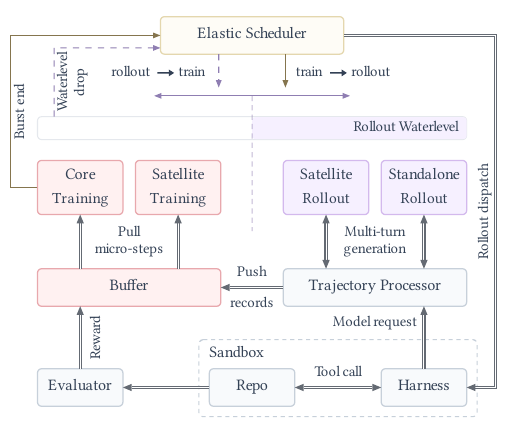}}
  \caption{Elastic scheduling and data flow in \sys{}.
  The sandbox hosts the harness and repository, linked by tool calls;
  Solid and dashed control
  links denote burst-boundary actions and {waterlevel} feedback,
  respectively. Double lines show data flow; the buffer combines trajectory
  storage and stream packing.}
  \label{fig:framework}
  
\end{figure}

{\sys{} supports online RL through unmodified black-box agent
harnesses at \xlong{}-horizons. Its architecture consists of two
cooperating parts (Figure~\ref{fig:framework}).
The elastic scheduler (Section~\ref{sec:elastic}) reallocates
GPUs between rollout and training while allowing live harness
executions to continue.
The trajectory processor (Section~\ref{sec:structured-trajectory})
constructs training samples from recorded {model calls},
retaining execution records independently of the samples admitted
for optimization.}

\paragraph{Elastic scheduler.} GPU resources comprise elastic cells
and an optional standalone rollout pool. Cells share a training parallel
layout; each can serve rollouts or host a complete {training actor} replica.
The black-box proxy, as part of the trajectory processor forwards model calls to rollout engines and
reroutes them during role changes, while harnesses and workspaces remain
outside GPU workers. The elastic scheduler dispatches rollout work at
startup and burst boundaries and tracks queued and running executions as the
rollout waterlevel. 
As this level falls,
cells move to training when the
remaining rollout capacity covers outstanding work. 
The core cell holds the {authoritative weights} and
alone maintains optimizer state. {Satellite cells pull its
parameter snapshot and can join the ongoing batch before the rollout tail
completes.}

\paragraph{Trajectory processor.} The proxy records exact
{model-request} tokens and behavior log-probabilities in trajectory
trees that share common prefixes and preserve each output's original
conditioning context. {The evaluator scores each execution's preserved
workspace, including assessable partial progress after timeout, and sends
the reward and its validity status to the buffer.} At training admission,
\sys{} computes group-relative advantages from the original execution
rewards and selects at most $J_{\max}$ trajectories per execution by role
priority. Selected trajectories inherit their execution's advantage. Only
policy-generated tokens are eligible targets, and shared targets contribute
once. {Averaging losses over selected trainable tokens within each
execution, then over executions in the batch, prevents additional
trajectories from increasing an execution's aggregate weight.}

{The two parts connect through a shared stream of training
micro-steps, allowing the core to begin a batch before all samples are
materialized.} Execution termination lowers the scheduling {waterlevel};
a group becomes ready for training only after all its executions have closed
and their designated outcomes are valid. {{The buffer in
Figure~\ref{fig:framework} stores execution records and associated rewards
and packs selected paths from ready groups on demand into micro-steps
carrying loss masks and weights.}} {Training cells consume
this stream, and the core performs one optimizer update per batch.}

Sections~\ref{sec:elastic} and~\ref{sec:structured-trajectory} detail the
elastic scheduling and trajectory processing planes, respectively.

%% file: content/elastic_scheduling_plane.tex
\section{Elastic Scheduler}
\label{sec:elastic}

\begin{figure*}[t]
  \centering
  {\includegraphics[width=\textwidth]{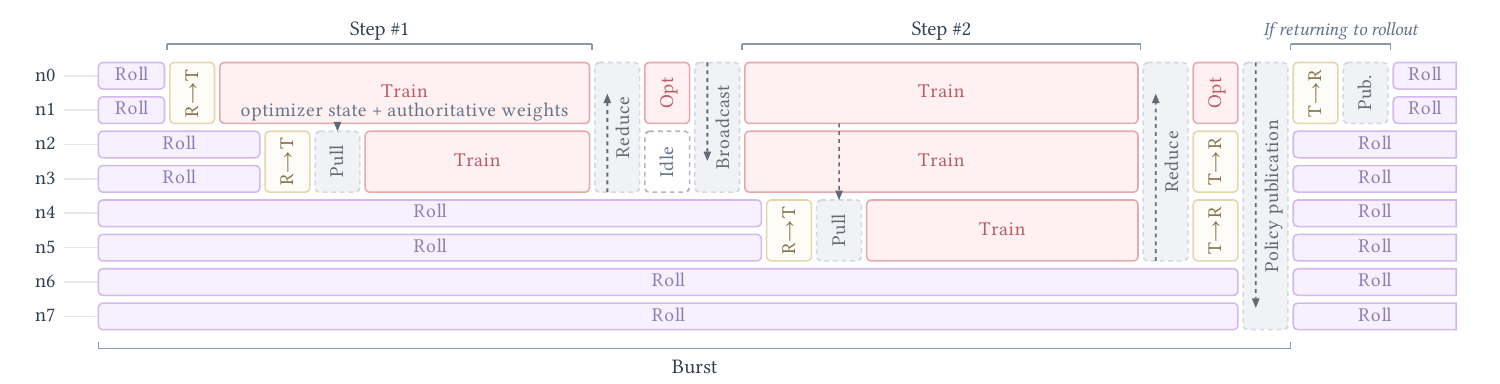}}
  \caption{Illustrative timeline of a two-step burst.
  The core (n0--n1) performs both optimizer updates.
  Satellite cells join ongoing steps after pulling its current
  parameters; n6--n7 stay in rollout during this burst.
  Gradient reduction precedes each update; a parameter broadcast
  follows the nonfinal update, and rollout policy publication follows
  the final update. The core's final role transition and local
  publication (Pub.) occur only when it returns to rollout.}
  \label{fig:execution}
\end{figure*}

The elastic scheduling plane coordinates role
transitions and streaming training to use available GPUs while
serving ongoing rollouts.

\subsection{Topology-Aware Resource Organization}

\paragraph{Cell layout and roles.}

\sys{} uses Megatron{~\citep{megatron}} for training and
SGLang{~\citep{sglang}} for rollout. Its $K$ \emph{elastic cells}
share a training parallel layout; an optional \emph{standalone pool}
remains in rollout.
We favor small cells for fine-grained resource reallocation, subject to
parallelism and topology constraints. Tensor, pipeline, expert, and context
parallel groups remain within each cell, with placement favoring
high-bandwidth interconnects. 

Cells follow a fixed order, with the \emph{core cell} first and
\emph{satellite cells} thereafter.
The core holds the authoritative weights and is the sole
optimizer owner.

\paragraph{Rollout capacity.}

Rollout capacity is measured as the number of concurrent rollout executions
allowed by \sys{}. Let $C_e$ and $C_s\ge0$ denote the
capacities of each elastic cell in rollout mode and the standalone pool, respectively, with $C_s=0$
when the standalone pool is absent. With $k(t)$ cells assigned to training,
the available rollout capacity is
\begin{equation}
    C_R(t)=C_s+\bigl(K-k(t)\bigr)C_e.
    \label{eq:rollout-capacity}
\end{equation}

\subsection{Boundary Dispatch and Waterlevel Control}
\label{sec:burst-dispatch}

\paragraph{Boundary dispatch.}

To implement boundary dispatch (Section~\ref{sec:background-scheduling}),
the scheduler tracks three cumulative execution counts: $d(t)$
increases when executions are dispatched into the rollout queue,
$p(t)$ increases when they launch, and $f(t)$ increases when they
terminate. Thus
$d(t)-p(t)$ counts queued executions, $p(t)-f(t)$ counts running
executions, and $0\le f(t)\le p(t)\le d(t)$.
Each launched execution increments $f(t)$ exactly once on completion,
failure, timeout, or cancellation, including surplus executions under
oversampling. Model-request retries remain within the same execution and
leave all counters unchanged. Queued entries record the published
policy version at dispatch; harnesses and sandboxes are created at launch.

In execution units, the initial dispatch is $d(0)=(\varphi+\mu)NB$,
with $\mu NB$ new executions dispatched after each burst's weight
publication.
Whenever there is spare rollout capacity {(Equation~\ref{eq:rollout-capacity})}
and engines are ready to serve, the scheduler immediately fills available
slots by launching
{\begin{equation}
    \min\bigl\{d(t)-p(t),\ C_R(t)-\bigl(p(t)-f(t)\bigr)\bigr\}
    \label{eq:rollout-launch}
\end{equation}}%
queued executions and leaving any excess queued. {The launches in
Equation~\ref{eq:rollout-launch} use the existing quota, so $d(t)$ remains
fixed within the burst.}

\paragraph{{Waterlevel} control and training supply.}

The {waterlevel} $w(t)=d(t)-f(t)$ is nonincreasing within a
burst. The next cell can join training only if the remaining
rollout capacity covers these outstanding executions:
{\begin{equation}
    w(t)\le C_R(t)-C_e.
    \label{eq:cell-admission}
\end{equation}}%
{Equation~\ref{eq:cell-admission} is paired with a supply check that
counts ready, unassigned groups, capped by the number of groups the
current batch still needs.} This count must cover at least
a fraction $\rho_{\min}\in(0,1]$ of the batch's $B$ groups continuously
for $\tau$ before a transition begins. The timer resets after each
transition and at each batch start. Cells transition serially, core
first and then satellites by index, and remain in training throughout
the burst, forming a growing prefix.

\paragraph{Burst-End Weight Sync and Core Retention.}

After the burst's final optimizer update, all training satellites
return to rollout and restore their SGLang engines. The core then
publishes the updated weights to all satellite and standalone rollout
engines, preparing them for the next burst.

After synchronization, the core remains in training only if at least
$\rho_{\min}B$ ready, unassigned groups are available for the next batch
and the remaining rollout capacity covers outstanding work plus the
pending $\mu NB$ executions:
\begin{equation}
  w(t)+\mu NB\le C_s+(K-1)C_e.
  \label{eq:core-retention}
\end{equation}

If either the supply check or Equation~\ref{eq:core-retention} fails,
the core returns to rollout and reshards its parameters into the local
SGLang (Figure~\ref{fig:execution}).

\subsection{Harness-Preserving Role Transitions}
\label{sec:elastic-execution}

\paragraph{Request rerouting.}

During a role transition, the proxy reroutes model calls
while the harness, workspace, and tool execution state remain in place.
The source cell first stops accepting new requests. The scheduler
reserves destination capacity for affected executions and updates their
routes, then cancels outstanding requests on the source engines.
The proxy transparently retries these requests at the destination
engines. The scheduler prioritizes available standalone engines,
followed by engines in rollout cells in descending cell order.
This ordering favors destinations expected to remain available longer,
reducing repeated rerouting of the same execution within a burst.

\paragraph{KV-cache migration.}

KV-cache migration can reduce prefix recomputation during
request rerouting. Before a request is canceled on the source engine, the
source pins the corresponding cache blocks. Once the request is canceled
and cache writes have stopped, the destination can read these blocks
through RDMA. The source releases the cache only after the destination
acknowledges receipt and takeover.

\paragraph{Role-state management.}

After request rerouting and cache handoff are complete, the cell
offloads rollout parameters and releases KV caches and temporary buffers
to reclaim GPU memory, then restores training parameters and buffers.
Backend processes and communication groups remain alive across role
changes.

\subsection{Streaming Training with Dynamic Membership}

\paragraph{Centralized dynamic data parallelism.}

To let satellites join an ongoing batch, the core exposes
{buffers holding its authoritative weights} through Mooncake's
Transfer Engine during its transition from rollout to training for the
burst's first step, and before each subsequent step.
While training on the current batch is still in progress, satellites
can join in the predefined cell order. Each satellite registers with
the scheduler, pulls the core's weights via RDMA, and begins
training without interrupting the core (Figure~\ref{fig:execution}).
The weights remain fixed during gradient accumulation, so joining
satellites read a consistent snapshot. The controller closes the
joining window before the optimizer update.

{After the entire batch has been processed, the core aggregates and
normalizes gradients from all training cells.} It then applies gradient clipping and
performs a single optimizer update. After each nonfinal update, it
broadcasts the new parameters to the current satellites, which remain
in training for the next step.

For $K$ ordered cells, we prebuild $K-1$ sets of communication groups,
one for each prefix of $2,\ldots,K$ cells. Within each set, groups
connect ranks holding corresponding model shards across cells.
Each collective uses the set matching its participating prefix.

\paragraph{Streaming training.}
\label{sec:elastic-streaming}

Streaming training uses two buffers, shown as the Buffer in
Figure~\ref{fig:framework}. The \emph{trajectory buffer} stores execution
records and associated rewards by rollout group. The \emph{stream
packing buffer} prepares training inputs on demand. When its buffered
trajectories run low, it pulls a ready group from the trajectory
buffer, prioritizing the oldest policy version. It materializes the
group's selected trajectories into token rows and reorders the rows
for efficient packing.

The packing buffer emits training inputs in small \emph{micro-steps},
each providing data for a cell's data-parallel ranks. Cells atomically
claim micro-steps from the packing buffer's shared queue as compute
capacity becomes available, receiving disjoint work. Earlier or faster cells
consume more micro-steps, helping participating cells finish the batch
at similar times. The packing buffer selects the batch's $B$ groups
incrementally and closes the stream only after enqueuing all their
micro-steps.

Each cell processes incoming micro-steps with a continuous
one-forward-one-backward (1F1B) pipeline. All stages within a cell use
the same signal for input availability. When data are temporarily
unavailable, the stages complete outstanding communication and pause
at consistent scheduling points, preserving pipeline state. After the
stream closes and the queue is exhausted, the pipeline drains its
remaining work. Each cell that processes micro-steps therefore incurs
one startup and one final drain per batch.

%% file: content/trajectory_processing_plane.tex
 \section{Trajectory Processor}
  \label{sec:structured-trajectory}

   {The trajectory processing plane constructs training samples
  from harness execution records and task outcomes. During rollout,
  \sys{} preserves the exact {model-request} tokens and each
  output's original conditioning context using token-in, token-out (TITO)~\citep{tito}
  (Section~\ref{sec:trajectory-records}). After an execution terminates,
  \sys{} evaluates its preserved work, including assessable partial
  progress after timeout, under the task's scoring rule to obtain
  an execution-level reward
  (Section~\ref{sec:trajectory-partial-score}).}
  {{At training admission, \sys{} selects a bounded set of paths
  per execution from ready groups by role priority and fixes target
  masks so shared targets contribute once
  (Section~\ref{sec:trajectory-sampling}).}}

  \subsection{TITO and Trajectory Trees}
\label{sec:trajectory-records}
\label{sec:trajectory-tito}

\paragraph{Request records with TITO}
\sys{} implements TITO in the
black-box proxy (Section~\ref{sec:elastic-execution}) to maintain
a record of each model call.
Each record preserves the exact input and output token IDs, the output tokens' behavior log-probabilities, and associated metadata.

To maintain these records across harness message processing,
the proxy also caches original tool-call payloads by call ID.
Before matching a subsequent request's history, it restores these
payloads while preserving the incoming IDs for tool-response
association. This avoids spurious history mismatches caused by
tool-call reformatting.
The harness keeps its existing message interface, while training
consumes the recorded tokens and associated metadata.
Each output retains its original conditioning context even if
later compaction removes it from the harness's history.

\paragraph{Prefix sharing through trajectory trees}
\sys{} organizes the input/output tokens and associated metadata
in the TITO request records described above into a trajectory tree
for each execution, following the prefix representation used in
black-box agent training~\citep{clawgym2}.
Each node is built from a request record and stores its incremental
input, recorded output, and associated metadata.
After the tool-call restoration described above, the framework
matches an incoming request's serialized context against existing
paths and reuses the tokens stored in the matched nodes.
Only new input content is encoded, and the new request record is
appended as a node.
Diverging contexts create branches; requests with no matching
prefix attach to the root.
This organization captures the different context paths introduced
by compaction and sub-agent calls
(Figure~\ref{fig:data-hierarchy}(c)), while retaining independently
generated outputs as separate nodes.

Candidate trajectories remain in this shared representation until
selection, avoiding full-history expansion for every path.
Selected paths are materialized from the stored token segments
and their aligned metadata.

\subsection{Partial Scoring}
  \label{sec:trajectory-outcomes}
  \label{sec:trajectory-partial-score}

  \paragraph{{Scoring preserved work}}
  {At execution closure, \sys{} fixes the committed
  policy-generation request record and identifies any available
  artifact state to assess.}
  \sys{} evaluates the preserved work under the task's scoring rule
  after an execution terminates, including after timeout.
  {An unfinished execution can receive a valid score when
  its completed work is assessable.
  For these assessed outcomes, the data plane does not derive rewards
  from elapsed time, trajectory length, or termination status.}
  {Using executable checks or task-specific agentic
  evaluation~\citep{agent_as_judge}, the evaluator returns a score,
  its validity status, and a reference to the assessed state.}

  \paragraph{Preserving the scope of an assessment}
  An \xlong{} execution may evaluate intermediate artifacts, revise
  them, and abandon branches.
  {Each assessment remains associated with its execution,
  artifact version, and available branch reference; later assessments
  do not overwrite these associations.}
  {A critique used by a subsequent policy request remains
  part of its input, but its evaluator provenance excludes it from
  policy training targets.}
  {The designated task assessment supplies one reward for
  each scored execution; intermediate assessments remain associated
  with the states they evaluate.}

  \paragraph{{Handling failures}}
  {An evaluator failure leaves the assessment unresolved;
  evaluation may be retried against the same preserved state within
  the execution's overall time budget.
  If this budget expires without a valid reward, the recorded
  trajectories are excluded from training.
  An unresolved assessment is distinct from a valid zero score,
  which remains usable for training.}

  {For rare complete failures that leave no assessable work,
  such as image-pull failures, \sys{} inserts a dummy trajectory with
  zero reward as a failure placeholder.}

\subsection{Trajectory Sampling}
\label{sec:trajectory-admission}
\label{sec:trajectory-sampling}

The trajectory tree records all context in the execution, which \sys{}
turns into training data in two steps: it first masks every token that
should not be trained, and then converts the tree into individual
training trajectories.

\paragraph{Trainable tokens are defined by provenance.}
A node's tokens are eligible training targets only if the policy produced
them during rollout. TITO (Section~\ref{sec:trajectory-tito}) records
whether each token was emitted by our SGLang engines or supplied from
outside. System prompts, user prompts, tool results, and
assistant-role content the harness injects rather than samples from the
policy will be masked and carry no training signal: they are retained as
conditioning context only.

\paragraph{Sampling trajectories from the tree.}
Candidate trajectories correspond to the leaves of the trajectory tree:
a leaf together with the prefix path from the root defines one
trajectory. At \xlong{}-horizons a single execution can produce many
candidate trajectories, and their number may vary widely across
executions. Simply retaining all trajectories may lead to a decrease in training efficiency, so \sys{} retains at most $J_{\max}$ training trajectories
per execution and fills this budget in order of expected importance. To
rank the candidates, \sys{} classifies each leaf by the role of the {request}
sequence it terminates. For a Claude Code harness these classes are, for
the main agent and for each sub-agent, its task-driving trajectory and
its summary trajectory, ordered by the priority
{\begin{equation}
\begin{aligned}
\text{main} &> \text{main-summary} \\
            &> \text{sub-agent} > \text{sub-agent-summary}.
\end{aligned}
\label{eq:trajectory-priority}
\end{equation}}%
{The ordering in Equation~\ref{eq:trajectory-priority} follows two
principles.} The main agent's turns are what the
task-level reward most directly credits, whereas sub-agent trajectories
contribute only through delegated subtasks; and within each agent, turns
that advance the task rank above the summary trajectory, which manages
context rather than acting on the task.

Admission starts from the provenance mask above and draws trajectories one at a time. Each step takes the
highest-priority class that still has a leaf with unmasked trainable
tokens, and draws one leaf from it with probability proportional to its
unmasked count. The drawn trajectory's targets are fixed to the tokens
still unmasked at this draw; \sys{} then masks its entire root-to-leaf
path in the tree. This keeps a shared prefix from being trained more than once and leaves
already-fixed targets untouched, while the lowered counts reweight the
remaining leaves for the next draw. Admission repeats until no unmasked
trainable token remains or $J_{\max}$ trajectories are admitted, yielding
$J_i\le J_{\max}$ trajectories from execution $i$
(Figure~\ref{fig:data-hierarchy}), each inheriting the execution's
group-relative advantage.

{The admitted trajectories retain their execution identities,
recorded contexts and behavior log-probabilities, and fixed target
masks. Materialization and packing preserve these annotations for the
execution-level objective.}

%% file: content/experimental_results.tex
\section{Experimental Results}
\label{sec:experiments}

\begin{figure*}[!t]
  \centering
  \includegraphics[width=\textwidth]{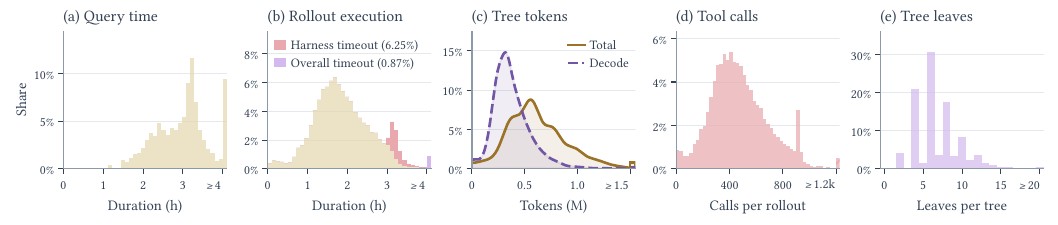}
  \caption{{NL2RepoBench} execution distributions with Qwen~3.6 122B
    at $\mathbb{E}[d]=1.5$ over steps 1--48.
    {(a) Query durations, measured as the longest of each query's
    16 rollout execution durations.}
    (b) Rollout execution durations.
    {(c) Total (solid) counts input and output tokens per rollout
    tree; Decode (dashed) counts generated output tokens. Both count
    shared prefixes once. The smoothed curves show shares per 50k tokens.}
    (d) Tool calls per rollout execution.
    (e) Leaves per rollout tree.
    In (b), pink and purple mark harness and overall timeouts, respectively;
    executions with both flags appear only in purple.
    Final bins labeled $\geq$ collect values at or above the indicated thresholds.}
  \label{fig:nl2repo-air-distribution}
\end{figure*}

\begin{figure*}[t]
  \centering
  \includegraphics[width=\textwidth]{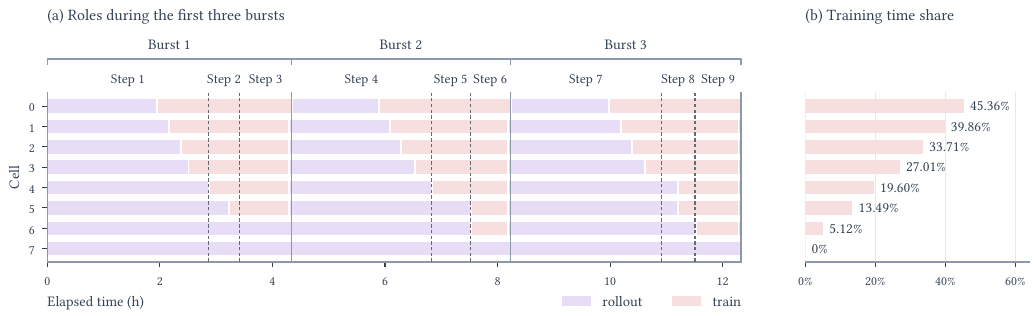}
  \caption{Execution profile of \sys{} on {NL2RepoBench} with Qwen~3.6 122B
    at $\mathbb{E}[d]=1.5$.
    (a) Cell roles during the first three bursts (steps 1--9).
    (b) Each cell's recorded training time as a fraction of the full
    observation window. {Cell 7 remains in rollout throughout this
    interval.}}
  \label{fig:nl2repo-runtime}
\end{figure*}

\begin{figure}[!t]
  \centering
  \includegraphics[width=\linewidth]{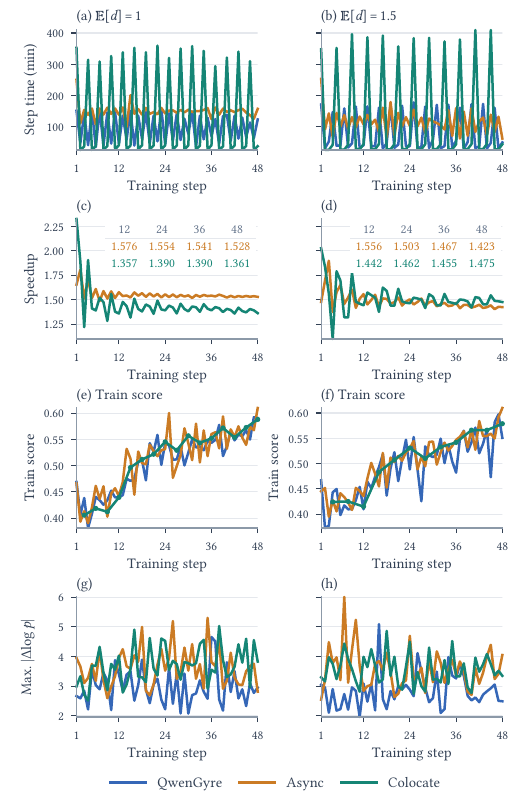}
  \caption{{NL2RepoBench} results with Qwen~3.6 122B at $\mathbb{E}[d]=1$ (left) and
    $\mathbb{E}[d]=1.5$ (right). Speedup is cumulative baseline time divided by
    cumulative \sys{} time. In (c,d), in-plot tables give speedups at
    selected checkpoints: Async in the upper row and Colocate
    in the lower row, with matching curve colors.
    Max.\ $|\Delta\log p|$ is the maximum absolute difference between
    current-policy and recorded rollout-policy log-probabilities
    over unmasked response tokens.}
  \label{fig:nl2repo-overview}
\end{figure}

\begin{figure}[!t]
  \centering
  {\includegraphics[width=\linewidth]{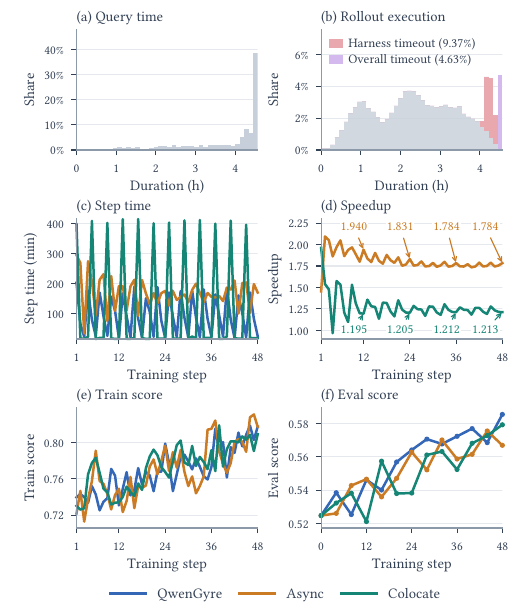}}
  \caption{{NL2RepoBench} with Qwen~3.8 2.4T. (a) Measured query durations,
    using the longest rollout execution wall time per query.
    (b) Measured rollout execution durations.
    The final bin collects durations $\geq 4.45$~h.
    In (b), pink and purple mark harness and overall timeouts,
    respectively.
    {(c,d) Step times and cumulative speedups over 48 steps.}
    Arrows in (d) label speedups at selected checkpoints in matching colors.
    {(e,f) Training and evaluation scores.}}
  \label{fig:nl2repo-max}
\end{figure}

\subsection{Experimental Setup}

\begingroup
\setlength{\emergencystretch}{2em}
\paragraph{Tasks.}
We evaluate \sys{} on three agentic workloads: {NL2RepoBench~\citep{nl2repobench}}, DeepSWE{~\citep{deepswe}}, and {TerminalBench~\citep{terminalbench}}.
We construct training data internally for all three workloads.
{For benchmark evaluation, we follow each benchmark's prescribed evaluation protocol.}
{NL2RepoBench} covers the \xlong{} setting of building software repositories
from natural-language specifications;
during training, we use a separate Qwen~3.7-Max evaluator to score the
generated code using independently written tests.
DeepSWE covers software engineering in existing repositories;
during training, we use repository tests for evaluation.
{TerminalBench} comprises multi-turn terminal tasks;
during training, we use task-specific verifiers for evaluation.
All workloads use Claude Code~2.1.220{~\citep{claude_code}} to execute shell commands and
manipulate files in containerized environments.
\par
\endgroup

\paragraph{Models.}
We use Qwen~3.6 122B and Qwen~3.8 2.4T.
For Qwen~3.6 122B, the {NL2RepoBench} experiments use 32 nodes, organized into
eight cells of four nodes each by default, with an inference concurrency limit of
144 per cell. {The cell-granularity ablation also evaluates four
8-node cells under the same 32-node budget.}
DeepSWE and {TerminalBench} follow the same protocol, using 24 nodes
organized into six cells with an inference concurrency limit of 192 per cell.
For Qwen~3.8 2.4T, the {NL2RepoBench} traces use four cells of 48 nodes each,
with a total inference concurrency limit of 384.
Unless otherwise stated, we use no dedicated standalone rollout nodes.

\paragraph{RL algorithm.}
\label{sec:rl-algorithm}
{All methods use our adaptation of Group Sequence Policy
  Optimization (GSPO)~\citep{gspo}, combining token-level importance
  weighting with execution-level clipping. Token importance weights
  are capped at 5, and the sequence-level clipping bounds are
  $[0.995,1.005]$. Each group contains 16 independent rollout executions;
  each update consumes 32 groups for NL2RepoBench or 48 for DeepSWE
  and TerminalBench. The default trajectory cap is $J_{\max}=5$ per rollout
  execution. All methods use Adam~\citep{adam} with a constant
  learning rate of $10^{-6}$.}

\paragraph{Baselines.}
We compare \sys{} with two scheduling baselines.
\emph{Async} assigns separate, fixed GPU pools to rollout
and training, allowing training steps to consume ready rollout groups
while other executions continue. It uses boundary dispatch with a
target queue depth, refilling consumed groups' slots after weight
publication.
\emph{Colocate} alternates a shared GPU pool between rollout and
training. Once the admitted rollout executions finish, the entire pool
switches to training and processes the completed rollout groups in a
burst of consecutive training steps.

{We compare methods at common target values of mean scheduling
staleness $\mathbb{E}[d]$, defined in
Section~\ref{sec:background-scheduling}. All three methods use boundary
dispatch; we choose the extra-dispatch parameter $\varphi$ and the
number of training steps per burst $\mu$ using
Equation~\ref{eq:mean-version-diff}. The original Colocate cannot
dispatch extra groups without interrupting the harness, so it uses
$\varphi=0$. For example, at $\mathbb{E}[d]=1.5$, Async can use
$\mu=1$ with $\varphi=1.5$, Colocate uses $\mu=4$ with $\varphi=0$,
and \sys{} can use $\mu=3$ with $\varphi=0.5$.}

\subsection{End-to-End Performance}
\label{sec:end-to-end}

\paragraph{{NL2RepoBench} on Qwen~3.6 122B}
Figure~\ref{fig:nl2repo-air-distribution} shows mean durations of 1.93 hours
per rollout execution and 2.96 hours per query; 9.51\% of queries
take at least four hours.
Figure~\ref{fig:nl2repo-air-distribution}(b) shows harness and overall
timeout rates of 6.25\% and 0.87\%, respectively.
Harness timeouts indicate that task execution has timed out; the results
are still scored and used for training.
{Failed evaluations can be retried within the overall execution
time budget. An overall timeout exhausts this budget before a valid
reward is obtained, so the execution is discarded from training.}

Figure~\ref{fig:nl2repo-overview}(a--d) shows that \sys{} achieves
$1.42$--$1.53\times$ speedups over Async and $1.36$--$1.47\times$
over Colocate after 48 training steps at both {mean scheduling staleness targets}.
Figure~\ref{fig:nl2repo-overview}(e,f) shows
comparable training-score curves across methods, indicating that faster
training is achieved while maintaining task performance.

The role timeline in Figure~\ref{fig:nl2repo-runtime}(a) shows how \sys{}
dynamically allocates resources between training and inference.
Cells enter training at different points while other cells remain in rollout.
Figure~\ref{fig:nl2repo-runtime}(b) shows how training time shares vary
across cells over the full observation window.
This staggered allocation allows training to begin with a subset of
cells and subsequently expand. It uses resources that a fixed
Async partition leaves assigned to rollout and avoids requiring
Colocate's pool-wide phase transition before training can proceed.
{The average switching times are 8.52~s for rollout to training
 and 3.46~s for training to rollout.
Both are negligible compared with rollout executions lasting hours.}

\paragraph{{NL2RepoBench} on Qwen~3.8 2.4T}
On {NL2RepoBench}, Qwen~3.8 2.4T has a higher timeout rate than
Qwen~3.6 122B, even though we extend the timeout limit.
Figure~\ref{fig:nl2repo-max}(a,b) shows the query and rollout execution duration
distributions. Among queries with complete duration records,
61.1\% take at least four hours, compared with 9.51\% for
Qwen~3.6 122B.
In 38.2\% of these queries, the longest rollout execution terminates with
an overall timeout. 

{As shown in Figure~\ref{fig:nl2repo-max}(c,d), \sys{} completes
48 training steps in 75.42 hours, compared with 134.55 hours for Async
and 91.47 hours for Colocate, yielding end-to-end speedups of
$1.78\times$ and $1.21\times$, respectively.} The concentration of query durations
near the timeout limit helps
explain the difference in gains over the two baselines. It suggests
fewer opportunities for \sys{} to reclaim cells early and expand
training gradually, consistent with the smaller gain over Colocate,
which also shares its resource pool between training and rollout.
Async's fixed partition leaves resources reserved for training
unavailable to rollout, even when training is idle. By dynamically
allocating these resources to rollout, \sys{} can increase inference
capacity during long executions, yielding a larger gain over Async.

{Figure~\ref{fig:nl2repo-max}(e) shows that \sys{}'s training
scores match those of Async and Colocate. Its evaluation passrate
improves from 52.48\% to 58.54\% over 48 training steps
(Figure~\ref{fig:nl2repo-max}(f)).}

\paragraph{DeepSWE and {TerminalBench} on Qwen~3.6 122B}
\label{sec:swe-tb}
DeepSWE and {TerminalBench} share the same end-to-end evaluation protocol: Qwen~3.6
122B, 24 training steps, and all three methods (\sys{}, Async, and
Colocate) at target {mean scheduling staleness values $\mathbb{E}[d]=1$ and $1.5$}.
Table~\ref{tab:swe-tb} reports the resulting cumulative speedups.

\begin{table*}[t]
  \centering
  \small
  \caption{Scheduling ablations over the first 12 {NL2RepoBench} training
    steps with Qwen~3.6 122B at $\mathbb{E}[d]=1.5$.
    A0, C0, and E0 reuse the Async, Colocate,
    and \sys{} base configurations from Section~\ref{sec:end-to-end}.
    Arrows show the evolution paths; {E1 and E2 branch independently
    from E0.}
    Switch is -- for fixed partitions, Global for switching all nodes
    together, Coarse for switching the {Colocate} pool together,
    and Fine for switching cells independently. Standalone nodes remain
    in rollout. Allocation lists training/rollout (T/R) or
    {Colocate}/standalone (C/S) node counts; cell entries give
    cells $\times$ nodes per cell. Total is the time for all 12 steps
    in hours; Time / E0 normalizes it to E0. Lower is better.}
  \label{tab:ablation-scheduling}
  \begingroup
  \definecolor{ablationA}{HTML}{CB7B24}
  \definecolor{ablationC}{HTML}{158575}
  \definecolor{ablationE}{HTML}{3567B7}
  \setlength{\tabcolsep}{3pt}
  \renewcommand{\arraystretch}{1.15}
  \begin{tabular*}{\textwidth}{@{\extracolsep{\fill}}lclccclrr@{}}
  \toprule
  ID & & Evolution & Stream & Switch & $({\varphi},\mu)$ & Allocation & Total (h) & Time / E0 \\
  \midrule
  A0 & \makebox[34pt][l]{\hspace*{5pt}\makebox[0pt][c]{\tikz[remember picture,baseline=-0.5ex] \node[circle,draw=ablationA,fill=white,line width=0.6pt,inner sep=1.3pt] (ablation-A0) {};}} & Original Async & Disabled & -- & $(1.5,1)$ & 16 T / 16 R & 25.08 & 1.556 \\
  A1 & \makebox[34pt][l]{\hspace*{1pt}\makebox[0pt][c]{\tikz[remember picture,baseline=-0.5ex] \node[circle,fill=ablationA,inner sep=1.3pt] (ablation-A1) {};}} & {A0 with adjusted partition} & Disabled & -- & $(1.5,1)$ & 8 T / 24 R & 27.01 & 1.676 \\
  A2 & \makebox[34pt][l]{\hspace*{5pt}\makebox[0pt][c]{\tikz[remember picture,baseline=-0.5ex] \node[circle,fill=ablationA,inner sep=1.3pt] (ablation-A2) {};}} & {A0 + multi-step burst} & Disabled & -- & $(0.5,3)$ & 16 T / 16 R & 32.58 & 2.021 \\
  A3 & \makebox[34pt][l]{\hspace*{11pt}\makebox[0pt][c]{\tikz[remember picture,baseline=-0.5ex] \node[circle,fill=ablationA,inner sep=1.3pt] (ablation-A3) {};}} & A0 + streaming & Enabled & -- & $(1.5,1)$ & 16 T / 16 R & 22.18 & 1.376 \\
  \midrule
  C0 & \makebox[34pt][l]{\hspace*{30pt}\makebox[0pt][c]{\tikz[remember picture,baseline=-0.5ex] \node[circle,draw=ablationC,fill=white,line width=0.6pt,inner sep=1.3pt] (ablation-C0) {};}} & Original Colocate & Disabled & Global & $(0,4)$ & 32 C / 0 S & 23.23 & 1.442 \\
  C1 & \makebox[34pt][l]{\hspace*{30pt}\makebox[0pt][c]{\tikz[remember picture,baseline=-0.5ex] \node[circle,fill=ablationC,inner sep=1.3pt] (ablation-C1) {};}} & {C0 + standalone rollout nodes} & Disabled & Coarse & $(0.5,3)$ & 28 C / 4 S & {22.56} & {1.400} \\
  C2 & \makebox[34pt][l]{\hspace*{26pt}\makebox[0pt][c]{\tikz[remember picture,baseline=-0.5ex] \node[circle,fill=ablationC,inner sep=1.3pt] (ablation-C2) {};}} & A3 + role switching; C1 + streaming & Enabled & Coarse & $(0.5,3)$ & 28 C / 4 S & 19.23 & 1.193 \\
  C3 & \makebox[34pt][l]{\hspace*{32pt}\makebox[0pt][c]{\tikz[remember picture,baseline=-0.5ex] \node[circle,fill=ablationC,inner sep=1.3pt] (ablation-C3) {};}} & {C2 with adjusted allocation} & Enabled & Coarse & $(0.5,3)$ & 16 C / 16 S & 20.69 & 1.284 \\
  \midrule
  E0 & \makebox[34pt][l]{\hspace*{19pt}\makebox[0pt][c]{\tikz[remember picture,baseline=-0.5ex] \node[star,star points=5,star point ratio=2.2,fill=ablationE,draw=ablationE,line width=0.2pt,inner sep=0pt,minimum size=6.5pt] (ablation-E0) {};}} & {C2 + fine-grained elastic allocation} & Enabled & Fine & $(0.5,3)$ & 8 cells $\times$ 4 & 16.12 & 1.000 \\
  E1 & \makebox[34pt][l]{\hspace*{14pt}\makebox[0pt][c]{\tikz[remember picture,baseline=-0.5ex] \node[circle,fill=ablationE,inner sep=1.3pt] (ablation-E1) {};}} & {E0 with one step per burst} & Enabled & Fine & $(1.5,1)$ & 8 cells $\times$ 4 & 17.87 & 1.109 \\
  E2 & \makebox[34pt][l]{\hspace*{24pt}\makebox[0pt][c]{\tikz[remember picture,baseline=-0.5ex] \node[circle,fill=ablationE,inner sep=1.3pt] (ablation-E2) {};}} & {E0 with four 8-node cells} & Enabled & Fine & $(0.5,3)$ & 4 cells $\times$ 8 & {16.47} & {1.022} \\
  \bottomrule
  \end{tabular*}%
  \begin{tikzpicture}[remember picture,overlay]
    \tikzset{evolution/.style={-{Stealth[length=2.5pt,width=2.2pt]},
      line width=0.55pt,shorten <=0.4pt,shorten >=0.4pt}}
    \draw[evolution,draw=ablationA] (ablation-A0) -- (ablation-A1);
    \draw[evolution,draw=ablationA] (ablation-A0) -- (ablation-A2);
    \draw[evolution,draw=ablationA] (ablation-A0) -- (ablation-A3);
    \draw[evolution,draw=ablationA] (ablation-A3) -- (ablation-C2);
    \draw[evolution,draw=ablationC] (ablation-C0) -- (ablation-C1);
    \draw[evolution,draw=ablationC] (ablation-C1) -- (ablation-C2);
    \draw[evolution,draw=ablationC] (ablation-C2) -- (ablation-C3);
    \draw[evolution,draw=ablationC] (ablation-C2) -- (ablation-E0);
    \draw[evolution,draw=ablationE] (ablation-E0) -- (ablation-E1);
    \draw[evolution,draw=ablationE] (ablation-E0) -- (ablation-E2);
  \end{tikzpicture}
  \endgroup
\end{table*}

\begin{table}[t]
  \centering
  \small
  \caption{24-step speedup of \sys{} over each baseline with
    Qwen~3.6 122B. {The $(\varphi,\mu)$ column gives \sys{}'s
    extra dispatch in batch units and training steps per burst.
    Async uses $(\varphi,\mu)=(\mathbb{E}[d],1)$; Colocate uses
    $(\varphi,\mu)=(0,2\mathbb{E}[d]+1)$, following
    Equation~\ref{eq:mean-version-diff}.}}
  \label{tab:swe-tb}
  \begin{tabular*}{\columnwidth}{@{\extracolsep{\fill}}lccrr@{}}
  \toprule
  Dataset & $\mathbb{E}[d]$ & $({\varphi},\mu)$ & vs. Async & vs. Colocate \\
  \midrule
  DeepSWE & 1 & $(0.5,2)$ & 1.569 & 1.610 \\
  DeepSWE & 1.5 & $(0.5,3)$ & 1.572 & 1.824 \\
  \midrule
  {TerminalBench} & 1 & $(0.5,2)$ & 1.383 & 1.809 \\
  {TerminalBench} & 1.5 & $(0.5,3)$ & 1.430 & 1.849 \\
  \bottomrule
  \end{tabular*}
\end{table}

Across the reported DeepSWE and {TerminalBench} configurations, \sys{} provides
consistent end-to-end speedups over both Async and Colocate.

\subsection{Ablation Study}
\label{sec:ablation}

Table~\ref{tab:ablation-scheduling} traces two paths toward \sys{}:
adding streaming training to Async and adding standalone rollout
capacity to Colocate.

\paragraph{Adding streaming to Async.}
Reallocating nodes from training to rollout (A0 to A1) increases
end-to-end time by 7.7\%. Although training nodes wait for rollout data,
executions finish in bursts; the smaller training pool consumes these
bursts more slowly, delaying weight publication and replacement
rollouts under boundary dispatch. Longer training bursts (A2) increase
time by 29.9\% relative to A0 because the fixed rollout pool cannot
supply them efficiently. Streaming (A3) instead reduces time by 11.6\%
by starting training on completed queries before a full batch is ready.
However, streaming captures only part of the potential gain.
The fundamental limitation remains Async's fixed resource
partition:
it cannot reallocate idle GPUs between training and rollout as
their demands change.

\paragraph{Adding standalone rollout nodes to Colocate.}
while the colocated pool switches to training, preserving
harness
state. Although extra dispatch mitigates the impact of rollout
stragglers, fewer training steps are completed per burst,
limiting
the end-to-end improvement over C0 to just 2.9\%.

Adding streaming to C1 (C2) reduces time by 14.8\%. However,
all 28
colocated nodes must still switch together, so streaming
training
can begin only late in the burst. Increasing standalone
capacity
(C3) shrinks the pool that must switch together, but also
leaves
fewer nodes for training. The resulting longer training phase
delays burst completion and the next dispatch, increasing
end-to-end time by 7.6\% relative to C2.
The fundamental limitation is the coarse switching granularity,
which prevents training from fully using the idle capacity that
emerges as colocated rollouts finish at different times.

\paragraph{Combining streaming and fine-grained allocation.}
E0 builds on C2 by replacing its fixed split of 28 colocated
nodes
and four standalone rollout nodes with eight independently
switching
4-node cells, while retaining streaming and three training
steps per
burst. Training can thus start on a subset of cells and expand
as
data arrives while other cells continue rollout, yielding a
$1.19\times$ speedup over C2.

E1 retains E0's streaming and cell allocation but uses one
training
step per burst, with extra dispatch increased from $0.5$ to
$1.5$
to preserve scheduling staleness. Although E1 takes 10.9\%
longer
than E0, it still achieves a $1.24\times$ speedup over A3,
which
also uses streaming and one training step per burst, and a
$1.08\times$ speedup over C2. These results show that fine-
grained
allocation provides substantial gains even without multi-step
bursts; longer bursts offer an additional benefit.

Streaming makes rollout data available for training earlier,
while fine-grained allocation supplies training resources
as that data arrives.

\paragraph{Cell granularity.}
\label{sec:ablation-cell-granularity}
{E2 builds on E0 by regrouping the same 32 nodes from eight
4-node cells into four 8-node cells, retaining streaming and
$(\varphi,\mu)=(0.5,3)$. It completes the 12 training steps in
16 hours 28 minutes (16.47 hours), only 2.2\% longer than E0's
16.12 hours. In E0, rollout fully hides training, with 20.85\%
slack remaining. E2's coarser cell granularity exposes a small
amount of training on the critical path, while most training
remains overlapped with rollout. This explains the modest impact
of reducing the number of cells from eight to four in this setting.}

\paragraph{Trajectory sampling budget.}
\label{sec:ablation-trajectory-cap}

Sampling only the main trajectory per execution is insufficient:
it yields lower training scores and larger gradient norms over
the first 42 training steps
(Figure~\ref{fig:trajectory-cap-ablation}). Increasing the budget
(Section~\ref{sec:trajectory-sampling}) to five trajectories matches the
scores obtained by sampling all available trajectories, with similarly
small gradient norms. {Over the same window, mean forward--backward
time per step with caps of one and five is 41.4\% and 74.8\% of the
uncapped baseline, respectively. A cap of five therefore preserves
comparable training scores while reducing forward--backward time by
25.2\%.}

\begin{figure}[!htbp]
  \centering
  {\includegraphics[width=\columnwidth]{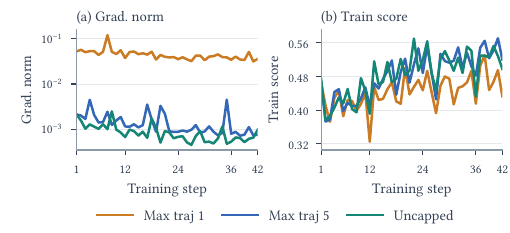}}
  \caption{Trajectory-budget ablation for \sys{} on NL2RepoBench
    with Qwen~3.6 122B {over training steps 1--42}.
    (a) Gradient norm on a logarithmic scale.
    (b) Mean training score. {Max traj 1 samples only the main
    trajectory per execution; Max traj 5 admits up to five trajectories;
    ``Uncapped'' uses all available leaf trajectories.}}
  \label{fig:trajectory-cap-ablation}
\end{figure}

%% file: content/related_work.tex
\section{Related Work}
  \label{sec:related_work}

  \paragraph{Agentic RL execution.}
  General-purpose LLM RL frameworks such as HybridFlow/veRL, OpenRLHF,
  ReaL, and ROLL compose rollout, reference, reward, and policy-update
  workers under different placement strategies
  ~\citep{hybridflow,openrlhf,real,roll}. Agent-oriented runtimes extend
  these abstractions with multi-turn tools, containerized environments,
  asynchronous dispatch, and trajectory-level scheduling
  ~\citep{verltool,ragen,agentrl,skyrl_agent,megaflow,heddle}.

  Synchronous {RL frameworks} preserve explicit policy-update boundaries but expose
  rollout-length skew. Recent work mitigates this tail through scheduling,
  length-aware packing, partial-rollout continuation, and tail isolation
  ~\citep{seer,rollpacker,april,tailsieve}. Asynchronous {RL frameworks} instead
  stream trajectories between disaggregated rollout and trainer pools,
  using bounded staleness, multiple policy versions, or fixed-weight
  trajectory groups to control training semantics
  ~\citep{areal,asyncflow,staleflow,dora,rolloutpipe}. SAO and SPO++ further
  adapt policy optimization to asynchronously arriving agent trajectories
  ~\citep{sao,spopp}.

  \paragraph{Elastic rollout--training scheduling.}
  Recent {RL frameworks} dynamically adjust the resource boundary between rollout
  and training. TideRL adapts rollout and reference execution using
  readiness signals~\citep{tiderl}; BiDiRL allows either side of an
  asynchronous pipeline to borrow idle resources from the other
  ~\citep{bidirl}; and DynaResize reallocates GPUs between rollout and
  training pools using communicator reuse and state staging
  ~\citep{dynaresize}. Libra combines a global resource planner with an
  elastic hybrid pool whose workers can join training as additional
  data-parallel replicas~\citep{libra}. Other {RL frameworks} obtain elasticity
  from complementary phases across RL jobs, online-serving clusters, or
  preemptible rollout resources~\citep{rollmux,rose,rlboost}, while
  AReaL-DTE reduces cross-role policy synchronization through sparse weight
  transfer~\citep{areal_dte}.

  \paragraph{Black-box harnesses and structured trajectories.}
  Harness-native {RL frameworks} connect opaque agent runtimes to policy
  optimization by instrumenting or proxying {model calls}. Agent Lightning
  decouples agent execution from training, while Polar and LEGO-RL preserve
  rollout-time token identity under harness-controlled context processing
  ~\citep{agentlightning,agentlightning_v1,polar,legorl}. ClawGym II
  reconstructs captured {requests} as shared-prefix trees and adapts PPO and
  GRPO so that each shared node contributes training signal only once
  ~\citep{clawgym2}; its retained paths share a reward derived from the
  final workspace evaluation.

  Related work exploits execution structure at other
  layers. BPO, IAPO,
  and MileGPO use branches, dependency graphs, or
  intermediate milestones
  for finer-grained credit
  assignment~\citep{bpo,iapo,milegpo}, whereas
  psRL and related {frameworks} reuse shared
  prefixes
  {after sample construction}
  ~\citep{psrl,treetraining}.
  These techniques address trajectory capture, credit
  assignment, or
  prefix computation individually. \sys{} instead
  couples
  readiness-driven actor reconfiguration with a
  trajectory processor
  {preserving} role, branch, environment-state,
  and evaluator
  provenance throughout \xlong{} rollout collection,
  admission, and
  training materialization.

%% file: content/limitations.tex
\section{Limitations}
\label{sec:limitations}

\paragraph{GPU budget and cell granularity.}
Elastic scheduling needs multiple independently switchable cells
while maintaining enough rollout capacity to serve unfinished
executions. Each cell must accommodate the model's memory and
training parallelism requirements, so a very small GPU budget may
not support this organization. Such deployments are generally
impractical for the resource-intensive \xlong{} workloads targeted
here. Within our evaluated regime, a modest number of cells is
sufficient: under the same 32-node budget, regrouping eight 4-node
cells (E0) into four 8-node cells (E2) increases execution time by
only 2.2\% (Table~\ref{tab:ablation-scheduling}). E0 fully hides
training behind rollout, while E2 exposes a small amount of training
on the critical path. Most of the overlap benefit is therefore
retained with four cells in this setting. This result concerns cell
granularity at a fixed budget; efficiency under substantially smaller
total GPU budgets remains unevaluated.

\paragraph{Shuffling with streaming training.}
With multiple training steps per burst ($\mu>1$), streaming training
assigns ready rollout groups incrementally to successive updates.
The current design therefore cannot globally shuffle data across all
minibatches before the burst starts. Supporting such a shuffle would
require waiting for the complete burst's data before its first update,
delaying training and sacrificing some overlap with rollout. This
constrains training procedures that depend on globally randomized
minibatch assignment. Single-step bursts ($\mu=1$) avoid the
cross-minibatch split while retaining elastic allocation and streaming.
At matched scheduling staleness, E1 takes 10.9\% longer than E0,
but still achieves a $1.24\times$ speedup over streaming Async (A3)
and a $1.08\times$ speedup over Colocate with streaming (C2)
(Table~\ref{tab:ablation-scheduling}). Thus, multi-step bursts provide
an additional efficiency gain, while single-step bursts remain a
practical option. These ablations quantify execution time; the effect
of sample ordering on learning quality has not been isolated.

%% file: content/conclusion.tex
\section{Conclusion}
\label{sec:conclusion}

We presented \sys{}, which combines elastic scheduling and trajectory
processing for online RL through unmodified harnesses at \xlong{}-horizons. It reallocates GPUs while preserving live executions and
constructs bounded training samples with original contexts and balanced
execution-level contributions. We demonstrate these capabilities through
\xlong{} RL training of our flagship model, Qwen~3.8 2.4T.
Across three workloads, \sys{} achieves up to $1.78\times$ speedup over
Async and $1.85\times$ over Colocate under equal GPU budgets and
matched scheduling staleness, with comparable training scores.

%% file: content/appendix_staleness_analysis.tex
\theoremstyle{definition}
\newtheorem{dispatchassumption}{Assumption}[section]
\theoremstyle{plain}
\newtheorem{dispatchtheorem}{Theorem}[section]
\newtheorem{dispatchcorollary}[dispatchtheorem]{Corollary}

\section{Analysis of Dispatch Policies}
\label{app:dispatch-analysis}

We first establish exact staleness identities, then derive step times
under a timing model and compare dispatch policies. Results that also
require the ready-rate approximation state it explicitly.
Counts and rates use rollout groups; subscripts $\mathrm b$ and
$\mathrm c$ denote boundary and continuous dispatch, respectively.

\begin{table}[H]
    \centering
    \caption{Notation for dispatch and step-time analysis.}
    \label{tab:dispatch-notation}
    \small
    \begin{tabularx}{\columnwidth}{@{}lX@{}}
        \toprule
        Symbol & Meaning \\
        \midrule
        $B$, $\mu$ & Groups per update; updates per burst. \\
        $\varphi$ & Extra boundary dispatch in units of $B$ groups. \\
        $Q_{\mathrm b}$ & Boundary depth after replenishment: all $(\varphi+\mu)B$
            unconsumed groups. \\
        $Q_{\mathrm c}$ & Continuous depth: unfinished groups only. \\
        $v_d$, $v_t$ & Published version at admission; trainer version
            immediately before consumption. \\
        $d$ & Staleness $v_t-v_d$; denotes its boundary mean in
            Section~\ref{app:dispatch-metrics}. \\
        $R$ & Mean ready, unconsumed groups at publication. \\
        $L$ & Mean time from admission to readiness, including
            rollout and evaluation. \\
        $v_T$ & Active training rate (groups per unit time). \\
        $T_{\mathrm b}$, $T_{\mathrm c}$ & Mean update interval,
            including data waiting ($\mu=1$). \\
        $\alpha$ & Mean admission phase between publications under
            continuous dispatch. \\
        $\beta$ & Boundary completion offset from the start of the step
            before the consuming step, averaged and divided by
            $T_{\mathrm b}$. \\
        \bottomrule
    \end{tabularx}
\end{table}

\subsection{Model and Accounting Assumptions}
\label{app:dispatch-model}

Each optimizer update consumes $B$ distinct groups and advances the
trainer's version by one. A burst contains $\mu$ updates and publishes
weights after its final update
(Sections~\ref{sec:background-workload} and~\ref{sec:burst-dispatch}).
As in Section~\ref{sec:background-scheduling}, scheduling staleness is
\begin{equation}
    d=v_t-v_d,
    \label{eq:app-staleness-definition}
\end{equation}
where $v_d$ is the latest published version at admission and $v_t$ is
the trainer's version immediately before consumption. The consuming
update is excluded; the metric tracks admission timing, not each
model call's behavior-policy lag.

\paragraph{Boundary dispatch.}
Each admitted group retains its dispatch slot until training consumes
it. The queue is filled at startup; thereafter, the $\mu B$ slots freed
within a burst are refilled together after its weight publication,
using the updated policy. Rollout replenishment therefore remains
coupled to training completion and weight publication.
The target depth includes unfinished
rollouts and ready or selected training groups, and is restored to
\begin{equation}
    Q_{\mathrm b}=(\varphi+\mu)B.
    \label{eq:app-boundary-depth}
\end{equation}

\paragraph{Continuous dispatch.}
In the asynchronous setting, continuous dispatch fully decouples
rollout replenishment from training.
A group releases its slot once rollout and evaluation finish.
A replacement starts immediately using the latest published policy,
without waiting for training or the next publication.
Thus $Q_{\mathrm c}$ counts only unfinished groups; ready groups wait
outside these slots. Equal depths do not imply equal total unconsumed
populations. Figure~\ref{fig:dispatch-policies} contrasts the resulting
slot occupancy and replenishment times.

\begin{figure}[!htbp]
    \centering
    \includegraphics[width=\columnwidth]{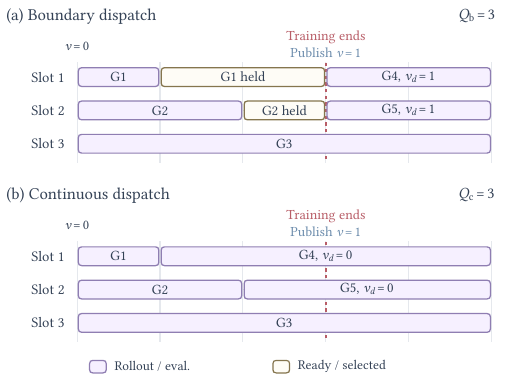}
    \caption{Illustrative dispatch timelines for $B=2$, $\mu=1$, and
    $Q_{\mathrm b}=Q_{\mathrm c}=3$.
    Rows track logical slots. G1/G2 train together once both are ready.
    Dashed lines mark training completion
    and publication of version 1.
    (a) Boundary dispatch then admits G4/G5 under version 1.
    (b) Continuous dispatch admits each replacement as soon as its
    predecessor becomes ready, under version 0.
    Gold segments retain ready or selected groups; rollouts reaching
    the right edge continue beyond the plotted interval.}
    \label{fig:dispatch-policies}
    \Description{Two aligned timelines each show three logical dispatch
    slots. G1 and G2 finish rollout at times 2 and 4, then train from
    4 to 6. Vertical dashed lines at time 6 mark training completion and
    publication of version 1. Boundary dispatch replaces G1 and G2 at
    time 6 under version 1;
    continuous dispatch replaces them at times 2 and 4 under version 0.
    G3 continues throughout both panels.}
\end{figure}

\begin{dispatchassumption}[Population accounting]
\label{ass:app-accounting}
Batch size $B$, burst size $\mu$, and dispatch depths are fixed.
Each admitted group is eventually consumed exactly once.
Queue populations and relative policy ages are stationary, with
finite expected total age of unconsumed groups at publication.
\end{dispatchassumption}

The counting results require neither FIFO nor a particular latency
distribution. Finite-window balances include a correction for the
change in outstanding staleness. For long-run empirical means, this
change per consumed group must vanish; bounded group counts alone do
not suffice.

\subsection{Exact Staleness Identities}
\label{app:dispatch-identities}

\begin{dispatchtheorem}[Boundary staleness]
\label{thm:app-boundary-staleness}
Under Assumption~\ref{ass:app-accounting}, boundary dispatch has
group-weighted mean scheduling staleness
\begin{equation}
    \mathbb{E}_{\mathrm b}[d]
      =\frac{Q_{\mathrm b}}B-\frac{\mu+1}{2}
      =\varphi+\frac{\mu-1}{2}.
    \label{eq:app-mean-scheduling-age}
\end{equation}
\end{dispatchtheorem}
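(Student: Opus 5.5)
We prove the identity by a Little's-law style counting argument over one publication period, using only Assumption~\ref{ass:app-accounting}. Under boundary dispatch every group is admitted at a publication instant, and its $v_d$ equals the version just published. Consider the $\mu B$ groups consumed during one burst, by the updates at trainer versions $v_t=v,\dots,v+\mu-1$, where $v$ is the version published at the start of the burst. Write the staleness of a group consumed at step offset $j\in\{0,\dots,\mu-1\}$ as
\[
d=(v+j)-v_d=j+a,
\]
where $a=v-v_d$ is the group's \emph{age in versions} measured at the start of the burst. The $j$ term averages to $\tfrac{\mu-1}{2}$, since each offset consumes exactly $B$ groups. It remains to show that the group-weighted mean of $a$ over consumed groups equals $\varphi$.

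\textbf{Step 1: fix an accounting instant.} Take the instant right after a publication and its replenishment. By Eq.~\eqref{eq:app-boundary-depth}, exactly $Q_{\mathrm b}=(\varphi+\mu)B$ unconsumed groups exist then, and each carries an age $a\ge0$ (the freshly admitted ones have $a=0$). Let $A$ be their total age.

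\textbf{Step 2: balance total age across one burst.} Over the burst, $\mu$ publications' worth of versions elapse: the next publication is at version $v+\mu$. Every group still unconsumed at the next accounting instant has its age increased by $\mu$. The $\mu B$ consumed groups leave with ages $a$, and $\mu B$ new groups enter with age $0$. Stationarity makes the expected total age the same at both instants, so
\[
\mathbb{E}\Bigl[\textstyle\sum_{\text{consumed}}a\Bigr]=\mu\bigl(Q_{\mathrm b}-\mu B\bigr).
\]
Here $Q_{\mathrm b}-\mu B$ counts the survivors. The finite expected total age from the assumption guarantees that the telescoping correction vanishes in the long-run average.

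\textbf{Step 3: combine.} Dividing by $\mu B$ gives $\mathbb{E}[a]=Q_{\mathrm b}/B-\mu=\varphi$. Adding $\tfrac{\mu-1}{2}$ yields
\[
\mathbb{E}_{\mathrm b}[d]=\frac{Q_{\mathrm b}}{B}-\frac{\mu+1}{2}=\varphi+\frac{\mu-1}{2},
\]
which is Eq.~\eqref{eq:app-mean-scheduling-age}.

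\textbf{Main obstacle.} The main care point is Step~2's bookkeeping. Consumed groups leave mid-burst, but their ages should be frozen at the burst start, with the within-burst offset $j$ counted separately. The accounting must therefore charge the increment $\mu$ only to survivors. Within a burst no version is published, so no new admission occurs, and $v_d$ of every group in the queue is constant until the next publication; this justifies the split. The step is also the only place where the stationarity and finite-age hypotheses are needed. No FIFO assumption enters, because only sums of ages appear.
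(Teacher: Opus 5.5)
Your proof is correct and is essentially the paper's argument. Both rest on the same stationary total-age balance: consumed staleness equals the age added to survivors minus the change in outstanding age, with replacements entering at age zero. The only difference is bookkeeping granularity. The paper charges one unit of age to the $Q_{\mathrm b}-kB$ survivors after each update $k=1,\ldots,\mu$ and obtains $\mu Q_{\mathrm b}-B\mu(\mu+1)/2$ in a single sum. You instead charge $\mu$ per publication to the $Q_{\mathrm b}-\mu B$ burst survivors and add the within-burst offsets $j$ separately, which is the same split the paper uses in its continuous-dispatch theorem.
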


\begin{proof}
Successive updates within a burst leave
$Q_{\mathrm b}-B,Q_{\mathrm b}-2B,\ldots,Q_{\mathrm b}-\mu B$
surviving groups. Each survivor gains one unit of age, adding
$\mu Q_{\mathrm b}-B\mu(\mu+1)/2$ in total.
Replacements enter only after publication, at age zero.
Consumed staleness equals this added age minus the change in
outstanding age. In stationarity the latter has mean zero.
Dividing by the $\mu B$ consumed groups proves the result.
\end{proof}

\begin{dispatchcorollary}[Colocate without extra dispatch]
\label{cor:app-boundary-cases}
Setting $\varphi=0$ in Theorem~\ref{thm:app-boundary-staleness} gives
\begin{equation}
    \mathbb{E}[d]=\frac{\mu-1}{2}.
    \label{eq:app-colocated-no-extra}
\end{equation}
\end{dispatchcorollary}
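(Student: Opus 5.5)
The corollary is a direct specialization of Theorem~\ref{thm:app-boundary-staleness}, so the plan is to substitute and then check that the hypotheses still hold. Colocate without extra dispatch is boundary dispatch with $\varphi=0$. In that case Equation~\ref{eq:app-boundary-depth} gives $Q_{\mathrm b}=\mu B$: the queue holds exactly the $\mu B$ groups one burst consumes, and is refilled only after the burst's weight publication. Assumption~\ref{ass:app-accounting} still applies unchanged. $B$, $\mu$, and the depth are fixed, each admitted group is consumed exactly once, and stationarity holds trivially, as argued below. So the theorem can be invoked directly, and Equation~\ref{eq:app-mean-scheduling-age} gives
\[
\mathbb{E}[d]=\frac{\mu B}{B}-\frac{\mu+1}{2}=\frac{\mu-1}{2}.
\]

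As a sanity check, and to make the result independent of the stationarity correction, I would also give a direct count for this case. At the publication that opens a burst, all $\mu B$ queued groups are admitted under the same published version $v$. During the burst, the $j$-th update (for $j=0,\ldots,\mu-1$) consumes $B$ of them at trainer version $v+j$. Their staleness is therefore exactly $j$, whatever the consumption order or latency distribution. Averaging over the $\mu B$ groups gives $\frac{1}{\mu}\sum_{j=0}^{\mu-1}j=\frac{\mu-1}{2}$.

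The only point needing care is the outstanding-age correction. With $\varphi=0$, no groups survive across a publication, so the outstanding age at each publication is identically zero. The correction term from the theorem's proof then vanishes exactly, not just in mean. There is no real obstacle here. The identity holds burst by burst, not only in the long-run average.
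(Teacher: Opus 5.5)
Your proposal is correct and matches the paper, which obtains the corollary simply by substituting $\varphi=0$ (so $Q_{\mathrm b}=\mu B$) into Theorem~\ref{thm:app-boundary-staleness}. Your supplementary direct count is also sound and adds something the paper does not state. With $\varphi=0$, every burst consumes exactly the $\mu B$ groups admitted at its opening publication, so the staleness values are $B$ copies each of $0,\ldots,\mu-1$. The identity therefore holds burst by burst, without appealing to the stationarity part of Assumption~\ref{ass:app-accounting}.
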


\begin{dispatchcorollary}[Async with one step per burst]
\label{cor:app-async-one-step}
Setting $\mu=1$ in Theorem~\ref{thm:app-boundary-staleness} gives
\begin{equation}
    \mathbb{E}[d]=\frac{Q_{\mathrm b}-B}{B}=\varphi.
    \label{eq:app-async-one-step}
\end{equation}
\end{dispatchcorollary}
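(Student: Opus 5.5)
The plan is to obtain the corollary by direct specialization of Theorem~\ref{thm:app-boundary-staleness}. That theorem gives, under Assumption~\ref{ass:app-accounting}, both closed forms $\mathbb{E}_{\mathrm b}[d]=Q_{\mathrm b}/B-(\mu+1)/2=\varphi+(\mu-1)/2$. With $\mu=1$ the first form becomes $Q_{\mathrm b}/B-1=(Q_{\mathrm b}-B)/B$. The second becomes $\varphi$. Their equality follows from Equation~\ref{eq:app-boundary-depth}, since $Q_{\mathrm b}=(\varphi+1)B$ when $\mu=1$.

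Before substituting, I will confirm that the theorem's hypotheses still hold at $\mu=1$. The population-accounting assumption places no restriction on $\mu$ beyond its being fixed.

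As a sanity check, I would also trace the theorem's counting argument in this case, so that no degenerate sum is mistaken for an actual obstruction. One update consumes $B$ groups and leaves $Q_{\mathrm b}-B$ survivors. Each survivor gains one unit of age. The $B$ replacements then enter at age zero after publication. In stationarity, the consumed staleness per update therefore equals $Q_{\mathrm b}-B$, and dividing by $B$ gives the claim.

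There is no substantive obstacle here. The only step that needs care is checking that the burst-internal sum $B\mu(\mu+1)/2$ reduces correctly to $B$ at $\mu=1$, which it does.
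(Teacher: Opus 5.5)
Your proposal is correct and takes the same approach as the paper: the corollary is the direct substitution $\mu=1$ into Theorem~\ref{thm:app-boundary-staleness}, with $Q_{\mathrm b}=(\varphi+1)B$ reconciling the two closed forms. Your retracing of the counting argument is a faithful specialization of the theorem's own proof but is not needed. In that retracing, the added age $\mu Q_{\mathrm b}-B\mu(\mu+1)/2$ reduces to $Q_{\mathrm b}-B$ per update, and the stationary change in outstanding age has mean zero.
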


In particular, $\varphi=0$ and $\mu=1$ give zero staleness regardless
of rollout duration: wall-clock waiting does not advance the version.
Completion order and GPU placement affect when updates occur, but
leave the counting identity unchanged.

For continuous dispatch, let $R$ be the mean ready, unconsumed group
count \emph{at weight publication}. It is sampled once per publication,
not averaged over wall-clock time.

\begin{dispatchtheorem}[Continuous staleness]
\label{thm:app-continuous-staleness}
Under Assumption~\ref{ass:app-accounting}, continuous dispatch has
\begin{equation}
    \mathbb{E}_{\mathrm c}[d]
      =\frac{Q_{\mathrm c}+R}{B}+\frac{\mu-1}{2}.
    \label{eq:app-continuous-exact}
\end{equation}
\end{dispatchtheorem}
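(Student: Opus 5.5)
The plan is to reuse the age-accounting argument from the proof of Theorem~\ref{thm:app-boundary-staleness}, applied to one burst in stationarity. The change is that, under continuous dispatch, replacements enter \emph{during} the burst rather than after publication. Fix a burst that starts at a publication of version $P$, so the trainer version equals $P$ at that moment. Define the \emph{age} of an unconsumed group as the current trainer version minus its admission version $v_d$. A consumed group's staleness is then exactly its age just before the consuming update. As in the boundary proof, the total consumed staleness over the burst equals the total age added by the $\mu$ updates minus the change in outstanding age. By Assumption~\ref{ass:app-accounting} this change has mean zero, so the whole task reduces to computing the mean total age added per burst.

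\emph{Step 1 (population at publication).} When version $P$ is published, the unconsumed population has two parts: the $Q_{\mathrm c}$ unfinished groups occupying slots, and the ready, unconsumed groups, whose mean count is $R$ as sampled at publication. Every one of these groups gains one unit of age at each of the $\mu$ updates for which it is still unconsumed.

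\emph{Step 2 (credit in-burst admissions back to publication).} A group admitted during the burst, after $j$ updates, receives $v_d=P$ because publication happens only at burst end. Its age on admission is therefore already $j$. This is the same age it would have if it had been present at publication and aged through those $j$ updates. So I will account for all in-burst admissions as if they had entered at publication, and this leaves every consumed staleness and every outstanding age unchanged.

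\emph{Step 3 (count and sum).} By Assumption~\ref{ass:app-accounting}, each group is eventually consumed once and populations are stationary. Hence the mean number admitted per burst equals the number consumed, $\mu B$. With the crediting of Step~2, the virtual population after update $j$ has mean $Q_{\mathrm c}+R+\mu B-jB$. Summing over $j=1,\ldots,\mu$ gives a mean added age of
\[
\mu(Q_{\mathrm c}+R)+\mu^2B-\frac{B\mu(\mu+1)}{2}
=\mu(Q_{\mathrm c}+R)+\frac{B\mu(\mu-1)}{2}.
\]
Dividing by the $\mu B$ groups consumed per burst yields Equation~\eqref{eq:app-continuous-exact}.

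The main obstacle is justifying Step~2 together with the stationarity bookkeeping, because the virtual population is not a physical queue. I must check three things. First, ages assigned retroactively must never be negative, which holds because $v_d=P$ and the trainer version is at least $P$ throughout the burst. Second, groups admitted under $P$ that remain unconsumed at the next publication must enter the outstanding-age term consistently: they carry age $\mu$, and this is exactly what the virtual population records. Third, the expected admission count per burst must equal $\mu B$ even though admissions are triggered by rollout completions rather than by consumption. For the third point I will use the long-run balance in Assumption~\ref{ass:app-accounting}, that every admitted group is consumed exactly once, rather than any FIFO or latency assumption. Finally, $R$ must be the mean taken at publication instants, because Step~1 evaluates the population precisely at those instants.
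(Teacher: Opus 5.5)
Your argument is correct, but it takes a different route from the paper.

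\textbf{The paper's route.} It measures age against the latest \emph{published} version rather than the trainer version. In that coordinate every continuous-dispatch admission enters at age zero, and all age is added at publication instants: each of the $Q_{\mathrm c}+R$ unconsumed groups gains $\mu$. Stationarity equates this with the published-version age removed by training. The position of the consuming update within the burst is then added separately as $B\sum_{j=0}^{\mu-1}j=B\mu(\mu-1)/2$.

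\textbf{Your route.} You keep the trainer-version age of the boundary proof, so groups admitted during a burst carry nonzero initial age. You absorb this by crediting them back to publication. That costs one extra input: the mean number of admissions per inter-publication interval must be $\mu B$, which follows from flow balance under Assumption~\ref{ass:app-accounting}. The bookkeeping checks you list (nonnegative ages, agreement of virtual and physical totals at publication instants) all go through.

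\textbf{What each buys.} The paper's decomposition is shorter and never needs to know how many groups are admitted during a burst or when. Yours exhibits continuous dispatch as boundary dispatch with an effective post-publication population $Q_{\mathrm c}+R+\mu B$ in place of $Q_{\mathrm b}$. This makes the equal-depth gap $1+R/B$ of Corollary~\ref{cor:app-continuous-cases} immediate: the $\mu B$ credited admissions supply the $1$ when $\mu=1$.
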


\begin{proof}
Measure age against the latest published version; new groups enter
at zero age in this coordinate. Each publication advances the version
by $\mu$ and adds mean age $\mu(Q_{\mathrm c}+R)$ to survivors.
In stationarity, training removes the same published-version age per
burst. Actual staleness also includes positions within the burst,
which contribute $B\mu(\mu-1)/2$ in total. Dividing by $\mu B$ gives
the identity.
\end{proof}

\begin{dispatchcorollary}[Consequences for continuous dispatch]
\label{cor:app-continuous-cases}
With no ready carryover at publication, $R=0$ gives
\begin{equation}
    \mathbb{E}_{\mathrm c}[d]
      =\frac{Q_{\mathrm c}}B+\frac{\mu-1}{2}.
    \label{eq:app-colocated-continuous}
\end{equation}
For $\mu=1$ and equal depths $Q_{\mathrm c}=Q_{\mathrm b}$,
subtracting the two identities gives
\begin{equation}
    \mathbb{E}_{\mathrm c}[d]-\mathbb{E}_{\mathrm b}[d]
      =1+\frac RB.
    \label{eq:app-equal-depth-staleness-cost}
\end{equation}
\end{dispatchcorollary}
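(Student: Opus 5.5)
The corollary combines two substitutions into Theorem~\ref{thm:app-continuous-staleness} with one subtraction against Theorem~\ref{thm:app-boundary-staleness}. No new accounting argument is needed beyond Assumption~\ref{ass:app-accounting}, which both theorems already require.

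\textbf{First claim.} Substitute $R=0$ into Equation~\eqref{eq:app-continuous-exact}. The hypothesis ``no ready carryover at publication'' says exactly that the ready, unconsumed count sampled at each publication is zero, so its mean $R$ vanishes. Equation~\eqref{eq:app-colocated-continuous} then follows immediately.

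\textbf{Second claim.} Set $\mu=1$ in both identities.
\begin{itemize}
\item Theorem~\ref{thm:app-continuous-staleness} becomes $\mathbb{E}_{\mathrm c}[d]=(Q_{\mathrm c}+R)/B$, since $(\mu-1)/2=0$.
\item Theorem~\ref{thm:app-boundary-staleness}, in its first form (equivalently Corollary~\ref{cor:app-async-one-step}), becomes $\mathbb{E}_{\mathrm b}[d]=Q_{\mathrm b}/B-1$.
\item Impose $Q_{\mathrm c}=Q_{\mathrm b}$ and subtract. The depth terms cancel, leaving $\mathbb{E}_{\mathrm c}[d]-\mathbb{E}_{\mathrm b}[d]=R/B+1$, which is Equation~\eqref{eq:app-equal-depth-staleness-cost}.
\end{itemize}
Both identities are expressed in the same staleness coordinate $d=v_t-v_d$, so the subtraction is legitimate.

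\textbf{Main obstacle.} The difficulty is interpretive rather than computational: one must use the boundary identity in its depth form $Q_{\mathrm b}/B-(\mu+1)/2$, not its $\varphi$ form. I would state explicitly that ``equal depths'' means equal numeric slot counts, even though the two depths count different populations, as the text preceding Assumption~\ref{ass:app-accounting} warns. Under that reading the extra $1$ in the difference has a clear source: boundary depth includes the $B$ groups being consumed, while continuous depth excludes ready groups. The extra $R/B$ is the ready backlog that continuous dispatch keeps outside its slots.
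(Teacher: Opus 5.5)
Your proposal is correct and matches the paper's argument, which is the same two substitutions stated in the corollary itself: set $R=0$ in the continuous-dispatch identity, then for $\mu=1$ take $\mathbb{E}_{\mathrm c}[d]=(Q_{\mathrm c}+R)/B$, take the boundary identity in its depth form $Q_{\mathrm b}/B-1$, impose $Q_{\mathrm c}=Q_{\mathrm b}$, and subtract. Your remarks on where the extra $1$ and the $R/B$ term come from are reasonable context but are not needed for the derivation.
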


The $R=0$ case describes the idealized resumable {Colocate}
reference: rollout stops when $\mu B$ groups are ready, and unfinished
groups pause while those groups train. The live-harness constraint
in Section~\ref{sec:background-workload} can instead require draining
rollouts, whose publication-time population must be counted separately.

\subsection{Timing Model and Ready-Rate Approximation}
\label{app:dispatch-ready-rate}

The remaining timing results concern the {Async} reference.

\begin{dispatchassumption}[Timing model]
\label{ass:app-timing}
Each admitted group starts rollout immediately. Durations through
rollout and evaluation are independent draws from a common
distribution under both policies, with finite mean $L$ independent of
depth.
The trainer reserves $\mu B$ ready groups before a burst and executes
its updates at fixed throughput $v_T$, while rollout continues.
Dispatch and publication overheads are negligible.
Mean waiting and step times are finite, and $Q_{\mathrm c}/L<v_T$.
\end{dispatchassumption}

By Little's law{~\citep{little1961}}, continuous dispatch has
ready rate $\nu_R=Q_{\mathrm c}/L$. Subsequent expressions use
$Q_{\mathrm c}$ and $L$ directly.

\begin{dispatchassumption}[Ready-rate approximation]
\label{ass:app-ready-rate}
After reserving a burst's groups, older ready backlog is negligible.
The completion rate during training is close to its long-run mean
$Q_{\mathrm c}/L$.
\end{dispatchassumption}

\begin{dispatchcorollary}[Approximate ready count and staleness]
\label{cor:app-ready-estimates}
Under Assumptions~\ref{ass:app-accounting}--\ref{ass:app-ready-rate},
the publication-time ready count and mean continuous staleness are
approximated by
\begin{equation}
    R\simeq \frac{Q_{\mathrm c}}{L}\frac{\mu B}{v_T},
    \label{eq:app-ready-rate-estimate}
\end{equation}
\begin{equation}
    \mathbb{E}_{\mathrm c}[d]
      \simeq\frac{Q_{\mathrm c}}B+\frac{\mu-1}{2}
             +\frac{\mu Q_{\mathrm c}}{L v_T}.
    \label{eq:app-async-continuous}
\end{equation}
\end{dispatchcorollary}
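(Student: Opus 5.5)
The plan is to estimate $R$ directly from the ready-rate approximation, then substitute it into the exact identity of Theorem~\ref{thm:app-continuous-staleness}. Under Assumption~\ref{ass:app-timing}, the trainer reserves $\mu B$ ready groups at the start of a burst and processes them at throughput $v_T$, with negligible dispatch and publication overheads. The burst therefore occupies wall-clock time $\mu B/v_T$, and publication happens at the end of that interval.

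By Assumption~\ref{ass:app-ready-rate}, the ready, unconsumed groups present at publication are essentially those that finished during this training interval. Older backlog is negligible once the burst's groups have been reserved. Little's law gives the long-run ready rate $\nu_R=Q_{\mathrm c}/L$ under continuous dispatch, and the approximation assumes the completion rate during training stays close to this mean. Multiplying rate by duration then yields $R\simeq (Q_{\mathrm c}/L)(\mu B/v_T)$, which is the first display.

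The condition $Q_{\mathrm c}/L<v_T$ from Assumption~\ref{ass:app-timing} ensures that the trainer keeps up with arrivals. Without it, the backlog would grow without bound, the stationarity required by Assumption~\ref{ass:app-accounting} would fail, and the estimate could not hold.

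Next I substitute this $R$ into Equation~\ref{eq:app-continuous-exact}. Dividing by $B$ gives $R/B\simeq \mu Q_{\mathrm c}/(Lv_T)$, and adding $Q_{\mathrm c}/B+(\mu-1)/2$ produces the second display. Because Theorem~\ref{thm:app-continuous-staleness} is exact under Assumption~\ref{ass:app-accounting}, the only error in the final staleness estimate is inherited from the estimate of $R$.

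The main obstacle is justifying the step from the rate to a count at a specific instant, namely the expected number of completions within a window that is defined by the trainer's own reservation events. Those windows are not independent of the completion process, so this is not a literal consequence of Little's law; it is exactly the content of Assumption~\ref{ass:app-ready-rate}. I would therefore state the conditional-rate approximation explicitly rather than derive it, and label the results as approximations ($\simeq$).
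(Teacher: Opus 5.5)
Your proposal is correct and follows the paper's proof essentially step for step. You estimate $R$ as the burst duration $\mu B/v_T$ times the ready rate $Q_{\mathrm c}/L$ under Assumption~\ref{ass:app-ready-rate}, with the reserved groups already consumed and older backlog negligible, and then substitute into the exact identity of Theorem~\ref{thm:app-continuous-staleness}; your observation that the rate-times-window step is the content of that assumption rather than a consequence of Little's law also matches the paper's own caveat.
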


\begin{proof}
A burst lasts $\mu B/v_T$. Under
Assumption~\ref{ass:app-ready-rate}, multiplying this duration by
$Q_{\mathrm c}/L$ estimates the groups that complete and remain ready
at publication. Reserved groups have already been consumed.
Substitution into Theorem~\ref{thm:app-continuous-staleness} gives
the staleness estimate.
\end{proof}

Assumption~\ref{ass:app-ready-rate} does not follow from
$Q_{\mathrm c}/L<v_T$. Training starts when enough data are ready, so
its completion rate can differ from the rate at arbitrary times.
Timeout clusters and persistent ready backlog can invalidate the
approximation.

\subsection[Mean Step Time for Async]{Mean Step Time for {Async}}
\label{app:dispatch-step-time}

Set $\mu=1$. Let $T_{\mathrm b}$ and $T_{\mathrm c}$ be the mean
intervals between completed updates, including data waiting.
The active training duration $B/v_T$ is fixed, so these also equal
the mean intervals between training starts.

\paragraph{Completion phase.}
For each boundary-dispatched group, measure completion from the start
of the step preceding the one that consumes it. Define $\beta$ as the
mean offset divided by $T_{\mathrm b}$. For equal step lengths, one
marks the consuming training start and one half the midpoint.
For variable lengths, this is a ratio of means, not a mean of
fractions. Groups ready before the preceding start have negative
offsets.

\begin{dispatchtheorem}[Continuous-dispatch step time]
\label{thm:app-continuous-step-time}
Under Assumptions~\ref{ass:app-accounting} and~\ref{ass:app-timing}
with $\mu=1$, continuous dispatch has mean step time
\begin{equation}
    T_{\mathrm c}=\frac{BL}{Q_{\mathrm c}}.
    \label{eq:app-continuous-throughput}
\end{equation}
\end{dispatchtheorem}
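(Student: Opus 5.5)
The plan is a throughput balance built on Little's law, with step time obtained as the reciprocal of the long-run consumption rate. Under continuous dispatch, the $Q_{\mathrm c}$ slots hold only unfinished groups. Each slot is refilled immediately when its group becomes ready, and refilling does not depend on training or publication. So rollout plus evaluation behaves as a closed system with exactly $Q_{\mathrm c}$ groups in service at all times. By Assumption~\ref{ass:app-timing}, each group's sojourn time in this system has mean $L$, independent of depth and of training. Little's law for this closed population then gives the long-run rate at which groups become ready as $\nu_R = Q_{\mathrm c}/L$, the ready rate already stated after Assumption~\ref{ass:app-timing}.

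The second step connects readiness to consumption. Assumption~\ref{ass:app-accounting} says every admitted group is consumed exactly once, and it makes queue populations stationary. In particular the ready, unconsumed backlog has finite mean and does not grow. Over a long window $[0,t]$, the number of groups consumed therefore equals the number that became ready, minus a backlog term that is $o(t)$. The long-run consumption rate thus also equals $Q_{\mathrm c}/L$. For this to be consistent, training must not be the bottleneck. Each update consumes $B$ groups in active time $B/v_T$, and the condition $Q_{\mathrm c}/L < v_T$ ensures the trainer can keep pace, so the system is stable and the backlog stays bounded. With $\mu=1$, each update consumes exactly $B$ groups. The number of completed updates up to time $t$ is then $(Q_{\mathrm c}/L)\,t/B + o(t)$. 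Since $T_{\mathrm c}$ is the mean interval between completed updates, including data waiting, it is the reciprocal of the update rate, which yields $T_{\mathrm c} = BL/Q_{\mathrm c}$.

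The main obstacle is justifying the rate arguments rigorously rather than heuristically. Two points need care. The first is applying Little's law with sojourn times that are i.i.d. across groups but admitted at dependent times. Here I would use the sample-path version of Little's law, which needs only the existence of the limiting averages guaranteed by stationarity and finite means. The second is identifying the mean step time with the inverse throughput. Updates form a stationary sequence of intervals with finite mean, so a renewal-reward argument (or simply the ratio of the cumulative time to the cumulative update count) turns the long-run update rate into $1/T_{\mathrm c}$. I would note explicitly that the result does not depend on $v_T$. Training speed affects only how long ready groups wait, not throughput, which is fixed by the rollout side.
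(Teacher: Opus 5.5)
Your proposal is correct and follows the paper's route: Little's law on the $Q_{\mathrm c}$ always-occupied rollout slots gives the ready rate $Q_{\mathrm c}/L$. Flow balance with $B$ groups consumed per update (the paper's one-line ``continuous flow balance'') then gives $T_{\mathrm c}=B/(Q_{\mathrm c}/L)=BL/Q_{\mathrm c}$. Your extra care about the sublinear ready backlog and sample-path limits makes explicit what the paper leaves implicit, without changing the argument.
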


\begin{proof}
Continuous flow balance gives $T_{\mathrm c}=B/(Q_{\mathrm c}/L)$.
\end{proof}

\begin{dispatchtheorem}[Boundary-dispatch step time]
\label{thm:app-boundary-step-time}
Under Assumptions~\ref{ass:app-accounting} and~\ref{ass:app-timing}
with $\mu=1$, boundary dispatch has mean step time
\begin{equation}
    T_{\mathrm b}=\frac{L+B/v_T}{Q_{\mathrm b}/B-1+\beta}.
    \label{eq:app-boundary-little}
\end{equation}
\end{dispatchtheorem}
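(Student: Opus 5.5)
The plan is to apply Little's law to the boundary-dispatch population, using a cycle of one update ($\mu=1$) as the renewal unit.

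Every admitted group occupies a slot from admission until consumption. By Assumption~\ref{ass:app-accounting}, the slot population is fixed at $Q_{\mathrm b}$, except for the short window between consumption and refilling at publication; Assumption~\ref{ass:app-timing} makes dispatch and publication overheads negligible, so this window does not matter. Throughput is $B/T_{\mathrm b}$ groups per unit time. Little's law then gives
\[
Q_{\mathrm b}=\frac{B}{T_{\mathrm b}}\,W,
\]
where $W$ is the mean slot residence time. The remaining work is to express $W$ through $L$, $B/v_T$, $T_{\mathrm b}$ and $\beta$, and then solve for $T_{\mathrm b}$.

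To compute $W$, I would follow a single group. Under boundary dispatch with $\mu=1$, a group is admitted at a publication. Publication coincides with the end of some update, and the next training start begins the next step window. The group becomes ready after mean time $L$. It then waits until the step that consumes it finishes, which adds the active duration $B/v_T$ of the consuming update.

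I would split the residence time as follows: admission to readiness takes $L$; readiness to the start of the consuming step is the waiting time; and the consuming step itself takes $B/v_T$. The definition of $\beta$ measures completion from the start of the step preceding the consuming one. The start of the consuming step therefore lies one mean interval $T_{\mathrm b}$ after that reference point, so the mean wait from readiness to the consuming start is $(1-\beta)T_{\mathrm b}$. Because $\beta$ is defined as a ratio of means, this holds in mean without further assumptions. This gives
\[
W = L + (1-\beta)T_{\mathrm b} + \frac{B}{v_T}.
\]

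Here is where I expect trouble. Residence is counted from admission at publication, but the group's own admission time enters only through $L$, since rollout starts immediately. The subtle point is whether the slot is counted as held during the consuming update or released at its start. The stated formula has denominator $Q_{\mathrm b}/B-1+\beta$, which is consistent with holding the slot through the consuming update and including $B/v_T$ in $W$. I would double-check this bookkeeping against Figure~\ref{fig:dispatch-policies}(a), where G1 and G2 keep their slots through training.

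Substituting $W$ into Little's law gives
\[
\frac{Q_{\mathrm b}T_{\mathrm b}}{B}=L+\frac{B}{v_T}+(1-\beta)T_{\mathrm b},
\]
hence
\[
T_{\mathrm b}\Bigl(\frac{Q_{\mathrm b}}{B}-1+\beta\Bigr)=L+\frac{B}{v_T},
\]
which is the claimed expression.

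The main obstacle is justifying Little's law on a population that is only piecewise constant, and making the $\beta$ ratio-of-means definition align exactly with the mean wait. For the first, I would argue via the finite-expected-age condition in Assumption~\ref{ass:app-accounting}: the time-average population equals $Q_{\mathrm b}$ up to vanishing refill gaps, and stationarity lets the long-run averages converge.
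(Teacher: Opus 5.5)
Your proposal matches the paper's proof: Little's law with throughput $B/T_{\mathrm b}$ and slot residence $L+(1-\beta)T_{\mathrm b}+B/v_T$, with the slot held through the consuming update, then solving for $T_{\mathrm b}$. The one justification the paper states and you leave implicit is this: the group-weighted mean gap from the preceding step's start to the consuming step's start equals the per-step mean $T_{\mathrm b}$ because every update consumes exactly $B$ groups, and that is what turns the definition of $\beta$ into the mean ready wait $(1-\beta)T_{\mathrm b}$.
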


\begin{proof}
For boundary dispatch, every update consumes $B$ groups, so mean
ready waiting is $(1-\beta)T_{\mathrm b}$. A slot remains occupied for
the rollout duration $L$, this ready wait, and training time $B/v_T$.
Little's law therefore gives
\[
    Q_{\mathrm b}=\frac{B}{T_{\mathrm b}}
       \left[L+(1-\beta)T_{\mathrm b}+\frac{B}{v_T}\right].
\]
Solving for $T_{\mathrm b}$ gives the result.
\end{proof}

A later completion phase shortens the time a finished group holds its
boundary slot.

\subsection{Comparison at Matched Staleness}
\label{app:dispatch-metrics}

Continue with $\mu=1$.
Admissions between publications share the same integer version $v_d$,
although continuous dispatch can admit groups later in that interval.
For each admission, adding the fraction of the interval already elapsed
to $v_d$ defines an interpolated dispatch coordinate. Subtracting this
fraction from a group's staleness gives its interpolated staleness;
the consuming trainer version remains discrete.
Figure~\ref{fig:version-difference-metrics} illustrates the two metrics
for the same group.

\begin{figure}[!htbp]
    \centering
    \includegraphics[width=\columnwidth]{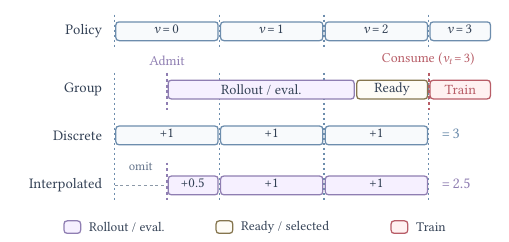}
    \caption{Discrete and interpolated staleness on a shared timeline
    ($\mu=1$). The group is admitted halfway between publications 0
    and 1 and consumed at $v_t=3$.
    With equally spaced publications shown for illustration, the
    discrete row counts $1+1+1=3$, while the interpolated row excludes
    the pre-admission fraction, giving $0.5+1+1=2.5$.
    Training continues beyond the right edge.}
    \label{fig:version-difference-metrics}
    \Description{A shared policy and group timeline aligns the published
    versions and the group's rollout, readiness and training above two
    rows showing the counted version intervals.
    Discrete staleness counts three complete intervals from version 0.
    Interpolated staleness omits the half interval before admission,
    giving 2.5. Both use trainer version 3 immediately before consumption.}
\end{figure}

Let $\alpha$ be the mean admission fraction under continuous dispatch,
with $0\le\alpha<1$. Boundary admissions have fraction zero.
In this subsection, reuse $d$ for the boundary policy's \emph{mean}
staleness. At equal mean interpolated staleness, the mean discrete
stalenesses are $d$ for boundary dispatch and $d+\alpha$ for continuous
dispatch. The two counting identities therefore give
\begin{equation}
    d=\frac{Q_{\mathrm b}}B-1,\qquad
    d+\alpha=\frac{Q_{\mathrm c}+R}{B}.
    \label{eq:app-phase-staleness-allowance}
\end{equation}
The phase $\alpha$ is measured, not assumed to equal one half.

\begin{dispatchtheorem}[Depths at equal interpolated staleness]
\label{thm:app-matched-depths}
Under Assumption~\ref{ass:app-accounting} with $\mu=1$,
matching the mean interpolated staleness at $d$ requires
\begin{equation}
    Q_{\mathrm b}=B(d+1),\qquad
    Q_{\mathrm c}=B(d+\alpha)-R.
    \label{eq:app-phase-depth-relation}
\end{equation}
\end{dispatchtheorem}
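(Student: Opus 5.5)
The plan is to derive both depth relations from the two exact counting identities, specialized to $\mu=1$, after translating the interpolated-staleness matching condition into conditions on discrete staleness. No timing model is needed. Only Assumption~\ref{ass:app-accounting} enters, through Theorems~\ref{thm:app-boundary-staleness} and~\ref{thm:app-continuous-staleness}.

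\textbf{Step 1: from interpolated to discrete staleness.} For each group, interpolated staleness equals discrete staleness minus its admission fraction. This holds because the consuming version $v_t$ stays discrete, and only the dispatch coordinate is shifted by the fraction of the publication interval already elapsed. Taking group-weighted means under stationarity then gives the conversion for each policy.

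\begin{itemize}
\item Boundary dispatch admits only at publication, so every fraction is zero. Mean interpolated staleness therefore equals mean discrete staleness.
\item Continuous dispatch has mean admission fraction $\alpha$. Its mean discrete staleness is therefore the interpolated mean plus $\alpha$.
\end{itemize}

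Setting both interpolated means equal to the common target $d$ yields discrete means $d$ (boundary) and $d+\alpha$ (continuous), exactly as in Equation~\ref{eq:app-phase-staleness-allowance}.

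\textbf{Step 2: invert the counting identities.}

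\begin{itemize}
\item Boundary case: Theorem~\ref{thm:app-boundary-staleness} (equivalently Corollary~\ref{cor:app-async-one-step}) with $\mu=1$ gives $\mathbb{E}_{\mathrm b}[d]=Q_{\mathrm b}/B-1$. Equating this to $d$ and solving gives $Q_{\mathrm b}=B(d+1)$.
\item Continuous case: Theorem~\ref{thm:app-continuous-staleness} with $\mu=1$ gives $\mathbb{E}_{\mathrm c}[d]=(Q_{\mathrm c}+R)/B$. Equating this to $d+\alpha$ gives $Q_{\mathrm c}=B(d+\alpha)-R$.
\end{itemize}

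\textbf{Main obstacle.} The delicate point is Step 1, specifically justifying that the mean of the per-group fractions is the $\alpha$ appearing in the identity. Two issues must be handled.

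\begin{itemize}
\item \emph{Same weighting.} $\alpha$ must be the mean over consumed groups, not a time average. The exact identity is a group-weighted mean over consumed groups. Under Assumption~\ref{ass:app-accounting} every admitted group is consumed exactly once, so the admitted and consumed populations coincide, and a mean over admissions is the same as a mean over consumptions. Linearity of expectation then transfers the shift.
\item \emph{Self-consistency of $R$ and $\alpha$.} Both $R$ and $\alpha$ are measured stationary quantities that may themselves depend on $Q_{\mathrm c}$. The statement should therefore be read as a fixed-point relation that any matched configuration must satisfy, not as an explicit formula. Since the theorem says "requires", necessity suffices and no existence argument is needed.
\end{itemize}
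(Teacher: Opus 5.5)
Your proposal is correct and follows essentially the paper's argument. The paper likewise converts equal mean interpolated staleness into discrete means $d$ (boundary, with zero admission fraction) and $d+\alpha$ (continuous), writes these as the $\mu=1$ counting identities in Equation~\ref{eq:app-phase-staleness-allowance}, and simply rearranges them; your remarks on the group-weighting of $\alpha$ and on reading the result as a necessary relation in the measured quantities $R$ and $\alpha$ are sound clarifications that the paper leaves implicit.
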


\begin{proof}
Rearrange the two balances in
Equation~\ref{eq:app-phase-staleness-allowance}.
Eliminating $d$ also gives $Q_{\mathrm c}=Q_{\mathrm b}-(1-\alpha)B-R$.
\end{proof}

\begin{dispatchcorollary}[Approximate depth relation]
\label{cor:app-matched-depths}
Under Assumptions~\ref{ass:app-accounting}--\ref{ass:app-ready-rate}
with $\mu=1$, matching interpolated staleness gives
\begin{equation}
    Q_{\mathrm c}\simeq
       \frac{B(d+\alpha)}{1+B/(L v_T)}.
    \label{eq:app-phase-depth-approx}
\end{equation}
\end{dispatchcorollary}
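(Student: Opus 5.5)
We obtain the corollary by substituting the ready-rate estimate for $R$ into the exact depth relation of Theorem~\ref{thm:app-matched-depths}, then solving the resulting linear equation for $Q_{\mathrm c}$. The exact relation $Q_{\mathrm c}=B(d+\alpha)-R$ holds under Assumption~\ref{ass:app-accounting} alone. All approximation enters through $R$.

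\textbf{Step 1: specialize the ready-count estimate.} Under Assumptions~\ref{ass:app-accounting}--\ref{ass:app-ready-rate}, Corollary~\ref{cor:app-ready-estimates} gives $R\simeq (Q_{\mathrm c}/L)(\mu B/v_T)$. With $\mu=1$ this becomes
\[
R\simeq \frac{Q_{\mathrm c}B}{L v_T}.
\]
The conditions needed there, namely a burst duration of $B/v_T$, negligible older backlog, and a completion rate near $Q_{\mathrm c}/L$, are exactly the hypotheses now in force. No new argument is required.

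\textbf{Step 2: substitute and solve.} Inserting the estimate into Equation~\ref{eq:app-phase-depth-relation} gives
\[
Q_{\mathrm c}\simeq B(d+\alpha)-\frac{Q_{\mathrm c}B}{L v_T}.
\]
Collecting the $Q_{\mathrm c}$ terms yields $Q_{\mathrm c}\bigl(1+B/(L v_T)\bigr)\simeq B(d+\alpha)$. Because $1+B/(Lv_T)>0$, we can divide through, which gives Equation~\ref{eq:app-phase-depth-approx}.

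\textbf{Main obstacle: justifying the approximation after rearrangement.} The delicate point is that $R$ depends on $Q_{\mathrm c}$ itself, so the ready-rate estimate is being used self-consistently inside the matching condition rather than as a fixed plug-in. The required observation is that the map $Q\mapsto B(d+\alpha)-QB/(Lv_T)$ is affine and strictly decreasing in $Q$. It therefore has a unique fixed point, and an $O(\varepsilon)$ error in $R$ moves that fixed point by at most $O(\varepsilon)\cdot\bigl(1+B/(Lv_T)\bigr)^{-1}\le O(\varepsilon)$. The $\simeq$ therefore survives the rearrangement with no amplification of the error.

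We also take $\alpha$ as the measured phase fixed by the matching, not as a quantity that changes with $Q_{\mathrm c}$, consistent with the paper's remark that $\alpha$ is measured rather than assumed. Finally, the resulting $Q_{\mathrm c}$ should satisfy the stability condition $Q_{\mathrm c}/L<v_T$ from Assumption~\ref{ass:app-timing}; this is a consistency requirement on the regime rather than a step of the derivation.
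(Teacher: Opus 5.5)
Your proposal is correct and follows the paper's own proof: substitute the $\mu=1$ ready-count estimate $R\simeq BQ_{\mathrm c}/(Lv_T)$ from Corollary~\ref{cor:app-ready-estimates} into the exact relation $Q_{\mathrm c}=B(d+\alpha)-R$ of Theorem~\ref{thm:app-matched-depths}, then collect the $Q_{\mathrm c}$ terms. Your extra observation is a sound refinement that the paper leaves implicit: an additive error $\varepsilon$ in $R$ shifts $Q_{\mathrm c}$ by only $\varepsilon/(1+B/(Lv_T))$.
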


\begin{proof}
Substitute $R\simeq BQ_{\mathrm c}/(L v_T)$ into
Theorem~\ref{thm:app-matched-depths} and collect the terms in
$Q_{\mathrm c}$.
\end{proof}

\begin{dispatchtheorem}[Exact step-time comparisons]
\label{thm:app-time-comparisons}
Under Assumptions~\ref{ass:app-accounting} and~\ref{ass:app-timing}
with $\mu=1$, equal interpolated staleness gives
\begin{equation}
    \frac{T_{\mathrm c}}{T_{\mathrm b}}
      =\frac{d+\beta}
             {d+\alpha-R/B+Q_{\mathrm c}/(L v_T)}.
    \label{eq:app-phase-step-time-ratio}
\end{equation}
Matching mean discrete staleness at $d>0$ instead gives
\begin{equation}
    \frac{T_{\mathrm c}}{T_{\mathrm b}}
      =\frac{d+\beta}
             {d-R/B+Q_{\mathrm c}/(L v_T)}.
    \label{eq:app-equal-discrete-time-ratio}
\end{equation}
\end{dispatchtheorem}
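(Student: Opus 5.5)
The plan is to combine the two step-time theorems directly and then substitute the depth relations for the chosen matching criterion. No new probabilistic argument is needed: all the queueing content is already in Theorems~\ref{thm:app-continuous-step-time} and~\ref{thm:app-boundary-step-time}, and the staleness content is in Theorems~\ref{thm:app-boundary-staleness} and~\ref{thm:app-continuous-staleness}.

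\textbf{Step 1: a common ratio.} Divide Equation~\ref{eq:app-continuous-throughput} by Equation~\ref{eq:app-boundary-little} to obtain
\[
  \frac{T_{\mathrm c}}{T_{\mathrm b}}
  =\frac{BL}{Q_{\mathrm c}}\cdot
   \frac{Q_{\mathrm b}/B-1+\beta}{L+B/v_T}.
\]
Multiplying numerator and denominator by $1/(BL)$ turns the denominator into $Q_{\mathrm c}/B+Q_{\mathrm c}/(Lv_T)$. So in both cases the task is only to rewrite $Q_{\mathrm b}/B-1$ and $Q_{\mathrm c}/B$ in terms of $d$, $\alpha$, and $R$.

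\textbf{Step 2: equal interpolated staleness.} Use Theorem~\ref{thm:app-matched-depths}, i.e.\ Equation~\ref{eq:app-phase-depth-relation}. It gives $Q_{\mathrm b}/B-1=d$, so the numerator is $d+\beta$. It also gives $Q_{\mathrm c}/B=d+\alpha-R/B$. Substituting these two identities yields Equation~\ref{eq:app-phase-step-time-ratio}.

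\textbf{Step 3: equal discrete staleness.} Here the boundary side is unchanged: Corollary~\ref{cor:app-async-one-step} still gives $Q_{\mathrm b}/B-1=d$. On the continuous side, apply Theorem~\ref{thm:app-continuous-staleness} with $\mu=1$ and set its mean staleness equal to the same $d$. This gives $Q_{\mathrm c}/B=d-R/B$, which is the previous case with $\alpha=0$. Substituting yields Equation~\ref{eq:app-equal-discrete-time-ratio}.

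\textbf{Main obstacle.} The algebra is routine; the only real care is over well-definedness.
\begin{itemize}
\item The division in Step 1 requires $Q_{\mathrm c}>0$ and $Q_{\mathrm b}/B-1+\beta>0$. Both hold because the mean step times in Assumption~\ref{ass:app-timing} are finite and positive.
\item The substitutions use the exact staleness identities, which rely on stationarity and on the vanishing change in outstanding age from Assumption~\ref{ass:app-accounting}. They do not use the ready-rate approximation, so the result stays exact, as the statement requires.
\item In the discrete case, the hypothesis $d>0$ is what allows a positive continuous depth $Q_{\mathrm c}=Bd-R$ when $R$ is small. For $d=0$ we would need $Q_{\mathrm c}=-R\le0$, so I would note that restriction explicitly.
\end{itemize}
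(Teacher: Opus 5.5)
Your proposal is correct and follows essentially the same route as the paper: divide the two step-time formulas, rewrite the denominator as $Q_{\mathrm c}/B+Q_{\mathrm c}/(Lv_T)$, and substitute $Q_{\mathrm b}/B-1=d$ together with $Q_{\mathrm c}/B=d+\alpha-R/B$ (interpolated matching) or $Q_{\mathrm c}/B=d-R/B$ (discrete matching), all taken from the exact counting identities. Your well-definedness remarks spell out what the paper compresses into the single sentence that the denominators are positive in the compared systems.
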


\begin{proof}
Since $Q_{\mathrm b}=B(d+1)$, the boundary denominator is
$Q_{\mathrm b}/B-1+\beta=d+\beta$.
Theorems~\ref{thm:app-continuous-step-time}
and~\ref{thm:app-boundary-step-time} therefore give
\[
  \begin{aligned}
    \frac{T_{\mathrm c}}{T_{\mathrm b}}
      &=\frac{BL}{Q_{\mathrm c}}\,
        \frac{d+\beta}{L+B/v_T}\\
      &=\frac{B(d+\beta)}
              {Q_{\mathrm c}\bigl(1+B/(L v_T)\bigr)}\\
      &=\frac{d+\beta}
              {Q_{\mathrm c}/B+Q_{\mathrm c}/(L v_T)}.
  \end{aligned}
\]
For interpolated staleness, Theorem~\ref{thm:app-matched-depths}
gives $Q_{\mathrm c}/B=d+\alpha-R/B$.
For discrete staleness, the counting identities instead give
$Q_{\mathrm c}/B=d-R/B$.
Substituting these relations proves both expressions.
Their denominators are positive in the compared systems.
\end{proof}

\begin{dispatchcorollary}[Approximate time comparisons]
\label{cor:app-time-comparisons}
If Assumption~\ref{ass:app-ready-rate} also holds, equal mean
interpolated staleness gives
$T_{\mathrm c}/T_{\mathrm b}\simeq(d+\beta)/(d+\alpha)$.
At equal mean discrete staleness $d>0$, the ratio is approximately
$1+\beta/d$.
\end{dispatchcorollary}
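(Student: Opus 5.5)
The plan is to start from the exact ratios in Theorem~\ref{thm:app-time-comparisons} and insert the ready-count estimate of Corollary~\ref{cor:app-ready-estimates}. With $\mu=1$, Equation~\ref{eq:app-ready-rate-estimate} reads $R\simeq (Q_{\mathrm c}/L)(B/v_T)$. Dividing by $B$ gives
\[
  \frac{R}{B}\simeq\frac{Q_{\mathrm c}}{L v_T}.
\]
The right-hand side is exactly the extra term appearing in both denominators of Equations~\ref{eq:app-phase-step-time-ratio} and~\ref{eq:app-equal-discrete-time-ratio}, so $-R/B+Q_{\mathrm c}/(L v_T)\simeq 0$.

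I will carry out the two cases in order.

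\textbf{Interpolated staleness.} Substitute the estimate into Equation~\ref{eq:app-phase-step-time-ratio}. The denominator collapses to $d+\alpha$, which gives $T_{\mathrm c}/T_{\mathrm b}\simeq(d+\beta)/(d+\alpha)$.

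\textbf{Discrete staleness.} Substitute the estimate into Equation~\ref{eq:app-equal-discrete-time-ratio}. The denominator collapses to $d$, which gives $(d+\beta)/d=1+\beta/d$. This step uses $d>0$, which is stated in the corollary.

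No new timing argument is needed: the hypotheses of Theorem~\ref{thm:app-time-comparisons} (Assumptions~\ref{ass:app-accounting} and~\ref{ass:app-timing}) plus Assumption~\ref{ass:app-ready-rate} are exactly those of Corollary~\ref{cor:app-ready-estimates}.

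The only delicate point is that an approximate equality in $R$ must still yield an approximate equality in the ratio, so some attention to how the error propagates is warranted. I would write $R/B=Q_{\mathrm c}/(Lv_T)+\epsilon$. The exact denominators are then $d+\alpha-\epsilon$ and $d-\epsilon$. The relative error in the ratio is first order in $\epsilon/(d+\alpha)$, respectively $\epsilon/d$, provided these denominators stay bounded away from zero. Theorem~\ref{thm:app-time-comparisons} already notes that they are positive, and $d>0$ (with $\alpha\ge0$) controls them. I expect this remark to be the main thing to state carefully. The algebra itself is an immediate cancellation.
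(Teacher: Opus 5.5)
Your proof is correct and uses the same single approximation as the paper: $R/B\simeq Q_{\mathrm c}/(Lv_T)$, which is Corollary~\ref{cor:app-ready-estimates} with $\mu=1$. The routes differ only slightly. The paper passes this approximation through the approximate depth relation (Corollary~\ref{cor:app-matched-depths}) to get $T_{\mathrm c}\simeq(L+B/v_T)/(d+\alpha)$, then divides by the exact $T_{\mathrm b}=(L+B/v_T)/(d+\beta)$. You instead substitute directly into the exact ratios of Theorem~\ref{thm:app-time-comparisons}, where the correction terms cancel. Your route is slightly more direct, and it handles the discrete case without the unstated discrete analogue of the depth relation that the paper's proof implicitly relies on. Your error-propagation remark is a sound addition that the paper omits.
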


\begin{proof}
The boundary depth gives the exact step time
\[
    T_{\mathrm b}
      =\frac{L+B/v_T}{Q_{\mathrm b}/B-1+\beta}
      =\frac{L+B/v_T}{d+\beta}.
\]
For continuous dispatch, substitute the depth estimate from
Corollary~\ref{cor:app-matched-depths}:
\[
  \begin{aligned}
    T_{\mathrm c}
      &=\frac{BL}{Q_{\mathrm c}}\\
      &\simeq
        \frac{BL\bigl(1+B/(L v_T)\bigr)}{B(d+\alpha)}\\
      &=\frac{L\bigl(1+B/(L v_T)\bigr)}{d+\alpha}\\
      &=\frac{L+B/v_T}{d+\alpha}.
  \end{aligned}
\]
Only the depth substitution uses the ready-rate approximation;
the remaining steps are algebraic. Taking the ratio gives
\begin{equation}
  \begin{aligned}
    \frac{T_{\mathrm c}}{T_{\mathrm b}}
      &\simeq
        \frac{(L+B/v_T)/(d+\alpha)}
             {(L+B/v_T)/(d+\beta)}\\
      &=\frac{L+B/v_T}{d+\alpha}\,
        \frac{d+\beta}{L+B/v_T}\\
      &=\frac{d+\beta}{d+\alpha}.
  \end{aligned}
  \label{eq:app-phase-step-time-approx}
\end{equation}
The common factor $L+B/v_T$ cancels because both policies share $L$
and $v_T$ under Assumption~\ref{ass:app-timing}.
For equal discrete staleness, the continuous mean is $d$ rather than
$d+\alpha$. The same calculation gives
$T_{\mathrm c}\simeq(L+B/v_T)/d$, hence
$T_{\mathrm c}/T_{\mathrm b}\simeq(d+\beta)/d=1+\beta/d$.
\end{proof}

Matching phases $\alpha=\beta$ thus gives $T_{\mathrm c}\simeq
T_{\mathrm b}$ under the interpolated comparison. For $\beta\geq0$,
the discrete comparison's approximate ratio is at least one.
Both conclusions require Assumption~\ref{ass:app-ready-rate};
matching staleness alone does not imply equal step times.

\paragraph{Scheduling implications.}
Continuous dispatch maintains $Q_{\mathrm c}$ unfinished groups,
while boundary dispatch's unfinished population falls between
publications. Its steadier rollout load comes with the equal-depth
staleness cost in Corollary~\ref{cor:app-continuous-cases}.
Elastic placement can stabilize per-instance concurrency under
boundary dispatch by reducing the rollout pool as unfinished work
falls and moving released cells to training
(Section~\ref{sec:burst-dispatch}), preserving its staleness accounting.

\paragraph{Scope.}
The counting identities concern group-weighted mean staleness;
worst-case and token-weighted guarantees require separate analysis.
Drops, reuse, or varying $B$ and $\mu$ require new accounting.
Filtering within a group leaves the identities unchanged if each
group still counts once.
Resource saturation, concurrency-dependent latency, and data stalls
within updates fall outside Assumption~\ref{ass:app-timing}.
Approximate depth and time comparisons additionally require
Assumption~\ref{ass:app-ready-rate}.

%% file: content/appendix_rl_algorithm.tex
\section[RL Algorithm Details]{{RL Algorithm Details}}
\label{app:rl-algorithm}

This appendix specifies the GSPO adaptation summarized in Section~\ref{sec:rl-algorithm}, including its execution-level clipping, token-level importance correction, loss normalization, and training hyperparameters.

\paragraph{{Training configuration.}}
{We use Group Sequence Policy Optimization (GSPO)~\citep{gspo}
with token-level importance correction. Each rollout group contains
$N=16$ independent rollout executions. Each training step
consumes a batch of $B=32$ distinct rollout groups for NL2RepoBench or $B=48$
for DeepSWE and TerminalBench. The remaining RL
hyperparameters are shared across all experiments.

All configurations use Adam~\citep{adam} with a constant learning rate of $10^{-6}$,
$\beta_1=0.9$, $\beta_2=0.95$, $\epsilon=10^{-15}$, weight decay 0.01,
and gradient clipping at norm 1.0. KL and entropy regularization are
disabled.}%

\paragraph{{Execution-level targets.}}
{For execution $i$, let $\mathcal{R}_i$ be its admitted trajectories.
For each $r\in\mathcal{R}_i$, the target set $\mathcal{T}_r$ is fixed to
the trainable tokens still unmasked when $r$ is drawn
(Section~\ref{sec:trajectory-sampling}). On-tree masking makes these
sets disjoint. Let $\mathcal{T}_i=\bigcup_{r\in\mathcal{R}_i}\mathcal{T}_r$,
$L_r=|\mathcal{T}_r|$, and $L_i=|\mathcal{T}_i|=\sum_{r\in\mathcal{R}_i}L_r$.
The sequence used for importance-ratio computation and clipping is this
entire execution's set of admitted, deduplicated trainable tokens.
Each token retains its original generation context. Tokens excluded by
provenance, role, or the $J_{\max}$ cutoff contribute neither to the
sequence ratio nor to the loss.}%

\paragraph{{Importance correction and clipping.}}
{Let $y_u$ be target token $u$, $c_u$ its recorded conditioning
context, and $p_u^{\mathrm{roll}}$ its recorded behavior probability,
reused without recomputation. Let $\operatorname{sg}$ denote
stop-gradient. For $L_i>0$, the token importance ratio and detached
execution-level sequence ratio are
\begin{equation}
  \begin{aligned}
    \rho_u(\theta) &= \frac{\pi_\theta(y_u\mid c_u)}{p_u^{\mathrm{roll}}},\\
    s_i(\theta) &= \operatorname{sg}\!\left[\exp\!\left(\frac{1}{L_i}
      \sum_{u\in\mathcal{T}_i}\log\rho_u(\theta)\right)\right].
  \end{aligned}
  \label{eq:execution-ratio}
\end{equation}
The detached sequence ratio in Equation~\ref{eq:execution-ratio} is
used for monitoring and determines a shared clipping mask according
to execution $i$'s group-relative advantage $\hat A_i$:
\begin{equation}
  m_i(\theta)=
  \begin{cases}
    0, & \hat A_i>0\ \text{and}\ s_i(\theta)>1.005,\\
    0, & \hat A_i<0\ \text{and}\ s_i(\theta)<0.995,\\
    1, & \text{otherwise}.
  \end{cases}
  \label{eq:execution-clip}
\end{equation}
The token-specific importance weight is
$\bar\rho_u(\theta)=\operatorname{sg}(\min\{\rho_u(\theta),5\})$.
These weights may differ within an execution; $s_i$ affects the loss
only through the common mask in Equation~\ref{eq:execution-clip}.
The token loss, upper-bounded by 100, is
\begin{equation}
  \ell_u(\theta)=\min\!\left\{100,\,
    -m_i(\theta)\bar\rho_u(\theta)\hat A_i
       \log\pi_\theta(y_u\mid c_u)\right\}.
  \label{eq:token-loss}
\end{equation}}%

\paragraph{{Loss normalization.}}
{Using the token losses in Equation~\ref{eq:token-loss}, weighting
each trajectory by its target share $L_r/L_i$ and averaging over its
targets gives
\begin{equation}
  \mathcal{L}_i(\theta)
    =\sum_{r\in\mathcal{R}_i}\frac{L_r}{L_i}\cdot\frac{1}{L_r}
       \sum_{u\in\mathcal{T}_r}\ell_u(\theta)
    =\frac{1}{L_i}\sum_{u\in\mathcal{T}_i}\ell_u(\theta),
  \label{eq:trajectory-loss}
\end{equation}
Equation~\ref{eq:trajectory-loss} gives every admitted target the same
normalization coefficient $1/L_i$; importance weighting and clipping
do not change this denominator. Executions with $L_i=0$ contribute zero
loss. A batch averages over all $B$ queries and their $N$ original
rollout executions:
\begin{equation}
  \mathcal{L}(\theta)=\frac{1}{B\,N}\sum_{q=1}^{B}\sum_{k=1}^{N}
     \mathcal{L}_{i(q,k)}(\theta),
  \label{eq:batch-loss}
\end{equation}
where $i(q,k)$ is query $q$'s $k$-th execution. Equation~\ref{eq:batch-loss}
preserves each execution's aggregate normalization weight regardless
of how many trajectories it contributes. For a fixed admitted target
set $\mathcal{T}_i$, changing which selected path carries a shared
target leaves the execution's sequence ratio, clipping decision, and
loss unchanged. Sampling can change $\mathcal{T}_i$ itself; the
invariance concerns the assignment and ordering of a fixed set of
targets. Packing preserves these per-execution weights.}%

%% file: content/appendix_case_study.tex
\raggedbottom
\section{Case Study: One Execution, Multiple Training Trajectories}
\label{app:trajectory-case-study}

We examine execution \texttt{f3ad38e5}, in which an agent uses Claude
Code to build a Python analytics SDK from a natural-language
specification. The task requires nine packages, including catalog
services, data frames, Flight connectivity, and pipeline utilities,
implemented from scratch in a shared workspace.
Figure~\ref{fig:case-prefix-tree} combines its prefix structure with
recorded messages to illustrate the trajectory-processing problem
of Section~\ref{sec:structured-trajectory}.

\begin{figure*}[p]
  \centering
  \includegraphics[width=\textwidth,height=0.88\textheight,keepaspectratio]{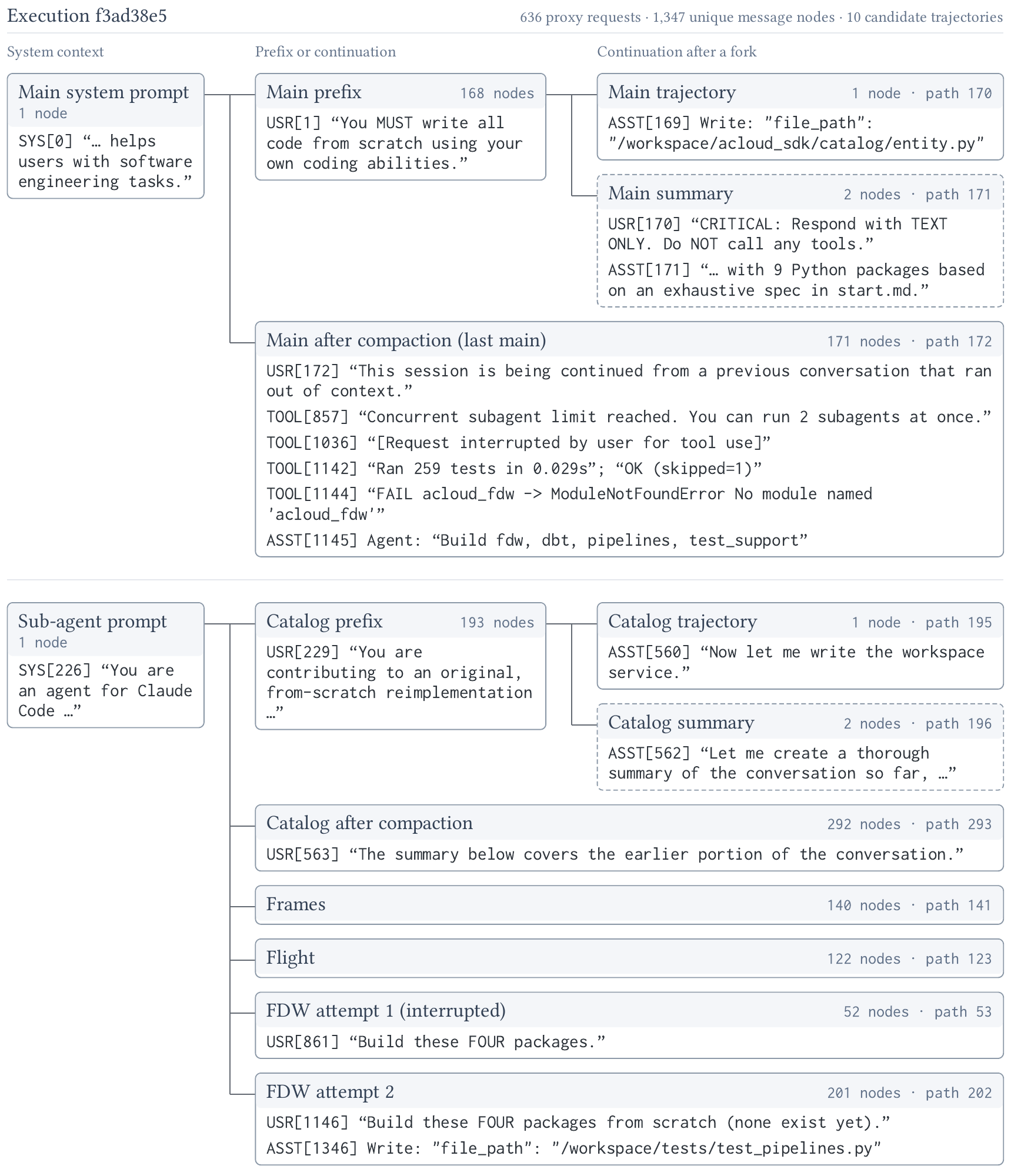}
  \caption{Prefix graph of execution \texttt{f3ad38e5} with real message
    excerpts. Rectangles represent collapsed context segments;
    right-angle edges encode prefix sharing. Each parent is top-aligned
    with its first child, and other children are stacked below.
    Node counts refer to each segment; \emph{path} counts the complete
    root-to-leaf context. Dashed rectangles identify summary leaves.
    Excerpts carry their original roles and node IDs; ellipses mark
    omissions. The interruption at \texttt{TOOL[1036]} remains in the
    main context where it was received.}
  \Description{A full-page node diagram with two system-prompt roots
    and ten trajectory leaves. Hierarchical levels run from left to
    right, with each parent and its first child aligned at their top
    edges. Horizontal and vertical connectors join rectangular nodes.
    Real log excerpts inside the nodes show the coding constraint,
    compaction, a two-agent concurrency limit, an interrupted tool call,
    missing packages after local testing, and renewed FDW delegation.
    Main and catalog prefixes each fork into task and summary paths;
    their compacted continuations start separate paths.}
  \label{fig:case-prefix-tree}
\end{figure*}

\subsection{Execution and Context Structure}

The trace reports 636 proxy requests and 1,347 unique message nodes
across ten candidate trajectories. The main agent produces two
task-driving trajectories and one summary. Five sub-agent invocations
produce six task-driving trajectories and one summary: the catalog
sub-agent compacts its history, while the FDW work item has two
attempts. These visualization nodes count messages, distinct from
the data plane's request increments and trainable tokens.

Main and sub-agent contexts have separate roots because their system
prompts differ. Delegation can be identified separately by matching
an \texttt{Agent} call's prompt to a sub-agent's initial user message
after removing harness reminders; the export also identifies a
sub-agent billing flag. Causal parentage therefore differs from
context-prefix sharing. Concatenating the two agents' conversations
would give outputs conditioning messages absent during generation.

\subsection{Compaction and Shared Prefixes}

Both the main agent and the catalog sub-agent compact their contexts.
After their system prompts, they have shared prefixes of 168 and 193
messages, respectively. Each prefix forks into a task-driving
assistant response and a summarization request--response pair.
Continuation after compaction starts a separate path under the
system prompt, with a new user message containing the summary.

The ten complete paths contain 53--293 message nodes each.
Expanding them independently produces 1,716 node occurrences,
compared with 1,347 shared nodes. The 369 extra occurrences repeat
shared prefixes and system prompts; their token volume depends on
message lengths.

If a task-driving leaf and its summary leaf are both selected,
their shared policy outputs must contribute loss once.
Section~\ref{sec:trajectory-sampling}'s on-tree masking retains the
prefix as context while suppressing repeated targets. Provenance
also matters: the original summary can be a target if the rollout
policy generated it, whereas its reinsertion into the continuation's
user prompt is conditioning input. TITO preserves the original token
contexts and distinguishes these two occurrences.

\subsection{Delegation, Interruption, and Outcome Validity}

One response attempts four \texttt{Agent} calls, but the harness
admits only two concurrent sub-agents. Two tool results report the
concurrency limit. Subsequent calls start the remaining work, but
the first FDW attempt is interrupted after a 53-node path. Later,
a local test run reports \texttt{OK} for 259 tests, with one skipped,
yet an import check cannot find the FDW, dbt, pipeline, and
test-support packages. The main agent launches another attempt,
which contributes a separate 202-node path. Rejection, partial
execution, and retry thus affect the record differently within
one rollout.

The leaf tagged \texttt{is\_final\_main} ends with renewed delegation:
it identifies the last recorded main trajectory, not successful
completion. An interrupted sub-agent likewise does not determine
the execution's reward. Section~\ref{sec:trajectory-partial-score}
requires the designated evaluator to assess the preserved workspace;
a group becomes trainable only after its executions close with valid
outcomes. The visualization supplies no designated task reward.
The recovery sequence also motivates preserving harness and workspace
state during the request rerouting of
Section~\ref{sec:elastic-execution}.

\subsection{Applying the Training Admission Rule}

The priority classes are two \emph{main}, one \emph{main-summary},
six \emph{sub-agent}, and one \emph{sub-agent-summary}.
For illustration, suppose $J_{\max}=4$ and each path has eligible
policy-generated targets. The rule in
Section~\ref{sec:trajectory-sampling} admits both main trajectories,
the main-summary trajectory, and one sub-agent trajectory.
Selection within each class is proportional to remaining trainable
tokens, with shared targets masked after every draw. The figure's
message counts cannot determine these token probabilities.

A flat mean over all ten leaf losses would allocate eight equal
coefficients to delegated or summary paths and repeat shared
targets. In \sys{}, selected paths inherit the original execution's
group-relative advantage. Token losses are averaged over disjoint
targets within each execution, then over the original rollout executions.
Additional compaction or delegation changes the admitted targets
without creating independent rewards or increasing the execution's
aggregate normalization weight.